\newtheorem{theorem}{Theorem}
\definecolor{orange(sae/ece)}{rgb}{1.0, 0.49, 0.0}
\definecolor{teal(sae/ece)}{rgb}{0, 0.47, 0.52}
\definecolor{purple}{rgb}{0.74, 0.65, 1.0}
\definecolor{dark_purple}{rgb}{0.72, 0.33, 0.82}
\definecolor{light_gray}{rgb}{0.5, 0.5, 0.5}
\definecolor{light_green}{rgb}{0.61,	0.79, 0.66}
\definecolor{light_blue}{rgb}{0, 0.36, 1}
\definecolor{sky_blue}{rgb}{0.51, 0.906, 0.91}
\definecolor{tan}{rgb}{1, 0.78, 0}
\newcommand{\cba}{\textcolor{tan}{\textbf{CBA}}\xspace}
\newcommand{\cadl}{\textcolor{light_green}{\textbf{C-ADL}}\xspace}
\newcommand{\coil}{\textcolor{light_blue}{\textbf{COIL}}\xspace}
\newcommand{\coilnoadapt}{\textcolor{sky_blue}{\textbf{COIL-NoAdapt}}\xspace}
\newcommand{\info}{\textcolor{orange}{\textbf{IG}}\xspace}
\newcommand{\para}[1]{\smallskip \noindent \textbf{{#1}.}}
\def\ind{\mathbbm{1}}
\def\secref#1{section~\ref{#1}}
\def\eqref#1{equation~\ref{#1}}
\def\1{\bm{1}}
\DeclareMathAlphabet{\mathsfit}{\encodingdefault}{\sfdefault}{m}{sl}
\SetMathAlphabet{\mathsfit}{bold}{\encodingdefault}{\sfdefault}{bx}{n}
\newcommand{\seq}[4]{(#1_#2)_{#2=#3}^{#4}}
\DeclareMathOperator*{\argmax}{arg\,max}
\newif\ifshowappendix
\begin{document}

\title{Optimal Interactive Learning on the Job via \\
Facility Location Planning}


\author{
\authorblockN{
Shivam Vats\authorrefmark{2}\authorrefmark{1} \qquad
Michelle Zhao\authorrefmark{3}\authorrefmark{1} \qquad
Patrick Callaghan\authorrefmark{3} \qquad
Mingxi Jia\authorrefmark{2} \\
Maxim Likhachev\authorrefmark{3} \qquad
Oliver Kroemer\authorrefmark{3} \qquad
George Konidaris\authorrefmark{2}}
\authorblockA{\authorrefmark{2}Brown University \qquad
\authorrefmark{3}Carnegie Mellon University \qquad
\authorrefmark{1}Equal contribution}
}

\maketitle

\begin{abstract}
Collaborative robots must continually adapt to novel tasks and user preferences without overburdening the user.
While prior interactive robot learning methods aim to reduce human effort, they are typically limited to single-task scenarios and are not well-suited for sustained, multi-task collaboration.
We propose COIL (Cost-Optimal Interactive Learning)---a multi-task interaction planner that minimizes human effort across a sequence of tasks by strategically selecting among three query types (skill, preference, and help).
When user preferences are known, we formulate COIL as an uncapacitated facility location (UFL) problem, which enables bounded-suboptimal planning in polynomial time using off-the-shelf approximation algorithms.
We extend our formulation to handle uncertainty in user preferences by incorporating one-step belief space planning, which uses these approximation algorithms as subroutines to maintain polynomial-time performance.
Simulated and physical experiments on manipulation tasks show that our framework significantly reduces the amount of work allocated to the human while maintaining successful task completion.
\end{abstract}

\IEEEpeerreviewmaketitle

\footnotetext{Correspondence to \texttt{shivam\_vats@brown.edu}.}

\section{Introduction}
Collaborative robots hold the promise of reducing human workload, increasing productivity, and improving quality of life by assisting with tedious and labor-intensive tasks.
For human-robot collaboration to be truly effective, robots must safely complete assigned tasks according to individual user preferences.
Although robots can be equipped with prior knowledge about the user and a library of common skills, they will inevitably face novel situations in practical long-term deployments. How can they learn and adapt on the job when faced with tasks that are beyond their capabilities or when the user preference is unclear? We explore these questions in the context of multi-task domains, such as factories and households, that require collaboration on a variety of tasks during deployment. The central challenge in this setting is to enable robots to learn interactively from humans while ensuring that all tasks are completed with minimum human effort.

Prior interactive learning approaches leverage active querying strategies to proactively identify queries that result in maximum uncertainty reduction~\citep{chernovaConfidencebasedPolicyLearning2007}, volume removal~\citep{sadighActivePreferenceBasedLearning2017} or information gain~\citep{biyik2022learningfixed}. Such active learning strategies have been shown to reduce the number of queries.
Furthermore, the use of multiple query types, such as demonstrations and preference queries, has been shown to alleviate the teaching burden by accounting for the cost associated with each query type~\citep{fitzgeraldINQUIREINteractiveQuerying2022}.
However, these approaches focus on single tasks, and therefore do not guarantee minimization of human effort across multiple tasks in extended collaboration.
The latter challenges the robot to reason proactively about the expected utility of its queries so that it does not overburden the user by seeking to learn all the tasks.

\begin{figure}[t]
    \centering
    \includegraphics[width=1\columnwidth]{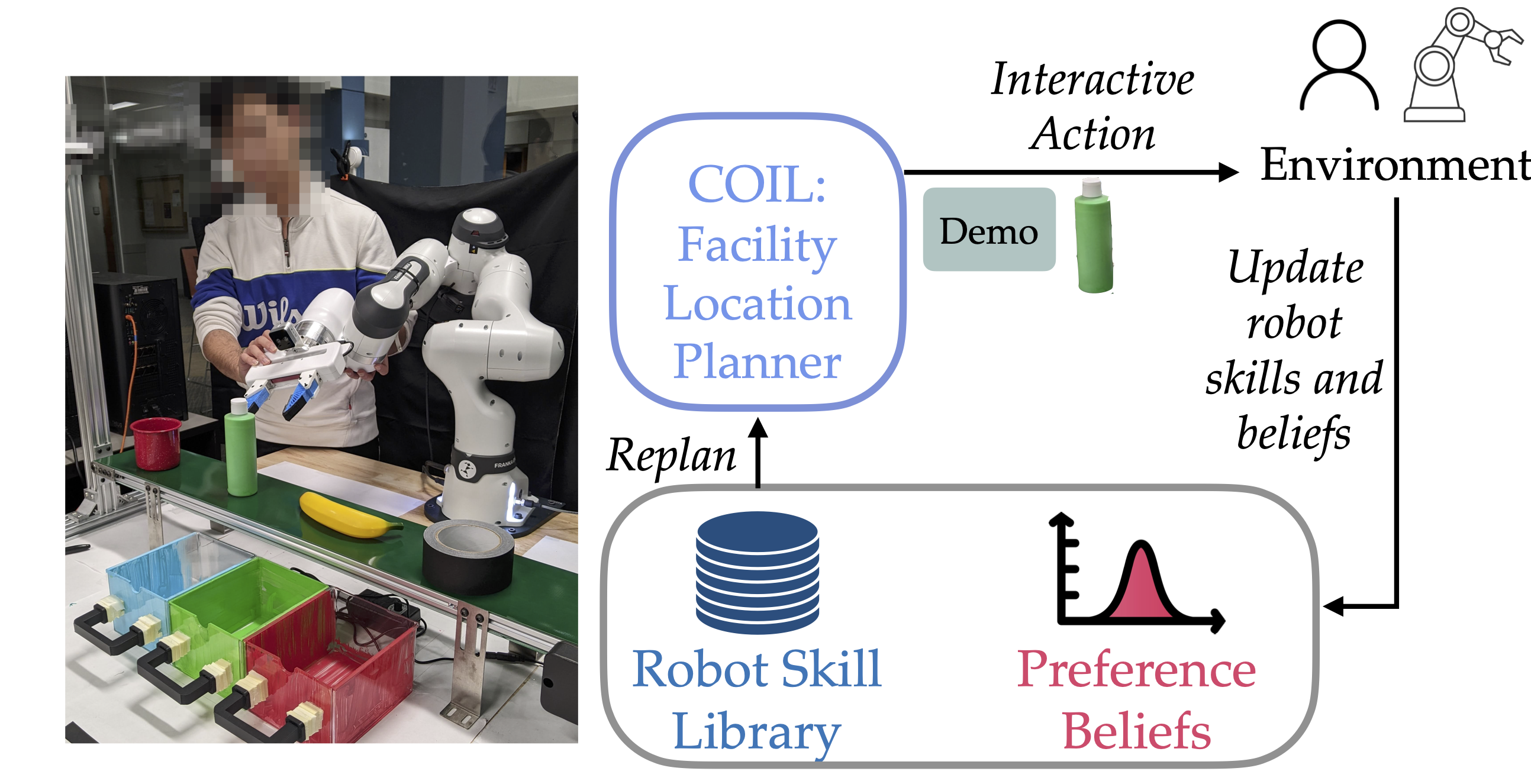}
    \caption{A human-robot team working together to pick objects off a conveyor and pack them in bins. The robot queries the human online to learn motor skills and user preference about how each object should be grasped and where it should be placed. We propose a planning algorithm COIL that optimizes how the robot allocates tasks and uses these different types of queries to minimize human effort during the course of its deployment.}
    \label{fig:teaser}
\end{figure}

We propose a novel multi-task interaction planner \textbf{COIL} (Cost-Optimal Interactive Learning) that uses three types of queries--- \emph{skill}, \emph{preference}, and \emph{human help}---to complete a given sequence of tasks, simultaneously satisfying hidden user preferences and minimizing user burden. 
Our key theoretical contribution is a \textbf{novel formulation} of multi-task interactive robot learning as an instance of the uncapacitated facility location (UFL) problem. Different from prior works, our formulation is cost-optimal and jointly minimizes human burden during learning and deployment.
Our formulation has significant theoretical and practical implications. \emph{Theoretically}, it shows that interactive robot learning in our setting can be approximated, i.e, a bounded sub-optimal solution can be computed in polynomial time. \emph{Practically}, it enables the use strong off-the-shelf UFL solvers for planning. We believe that our UFL formulation opens exciting avenues for future research by leveraging rich theory and algorithms developed by the operations research community, without the need to develop solutions from scratch.
Furthermore, we extend our UFL formulation to incorporate \textbf{multiple query types} by proposing a simple but effective one-step belief space planning algorithm that can request user preferences when uncertain. Notably, even solving this one-step belief space planning problem is NP-hard. Our UFL formulation enables us to leverage fast approximation algorithms as sub-routines to plan efficiently. 
We extensively evaluate and analyze the behavior of our algorithm in simulated and physical manipulation domains. We find that COIL minimizes user burden significantly better than existing approaches, can adapt to failures in teaching, and efficiently scales to long-duration collaboration.  COIL results in interactions with $12\% - 20\%$ and $23\%$ lower cost than the best performing baseline in simulated manipulation experiments and physical conveyor experiments respectively.
Additional details and videos of our experiments are available at \url{https://sites.google.com/view/optimal-interactive-learning}.

\section{Related Work}
\textbf{Learning through multiple interaction modalities} is an active area of robotics research~\cite{hwang2024CVPR, metz2024mappingspacehumanfeedback}. \citet{biyik2022learningfixed} first passively collect demonstrations to initialize a belief about the human's reward function and then actively probe the user with preference queries to zero-in on their true reward. \citet{fitzgeraldINQUIREINteractiveQuerying2022} also aim to learn a human's reward function, but they do so by actively selecting the query from among Corrections, Demonstrations, Preference Queries, and Binary Rewards expected to be most informative to the learner. In a similar vein, \citet{mehtaUnifiedLearningDemonstrations2024} use multiple modalities of physical interaction queries to first learn a reward model online using a neural network representation and then apply constrained optimization to identify an optimal trajectory. Like our work, these works aim to learn from multiple interaction types. However, the aforementioned methods apply to singular tasks, and while they actively query the human for feedback that will be useful to the robot (and can do so while considering the cost of querying), they do not plan for the future utility of learning those tasks in the way our method does.

\textbf{Human preference} is another wide-spread topic of interest in robotics~\cite{biyik2024batch, narcomey2024learning, nemlekar2024pecanpersonalizingrobotbehaviors, peng2024pragmatic, van2024simultaneously}. \citet{jeonRewardrationalImplicitChoice2020} present a unifying formalism for preference-based learning algorithms through multiple types of interaction with the understanding that a ``preference'' is implicit in the feedback provided by the human and the skill being taught. Moreover, these preferences must be learned through multiple interaction instances. In contrast, our work encodes preferences as discrete state or action parameters by which a skill must abide and are learned through just one interaction instance. \citet{hadfield-menellCooperativeInverseReinforcement2016} introduce cooperative inverse reinforcement learning to learn a human's reward function (i.e., preference). While their POMDP-based approach could be applied in our paradigm, our use of a HiP-MDP~\cite{doshi2016hidden} makes the problem tractable and enables us to design an efficient, effective planner. Work like \citet{bajcsyLearningPhysicalHuman2018} isolates a user's preferences by learning features one-at-a-time until converging to the complete reward model, but they do so solely through demonstration queries and without considering future utility. Finally, reinforcement learning from human feedback (RLHF) methods have achieved state of the art results in sample complexity and reward learning for complex tasks~\cite{abdoRobotOrganizeMy2015, christiano2017deep, hejna2023contrastive, rafailov2024direct}. Even so, these successes come with a substantial cost of requiring hundreds or even thousands of preference queries and human-hours, rendering them infeasible for on-the-job learning from people.

\textbf{On-the-job collaborative methods} automate task allocation between machine and human either through modeled or learned approaches~\cite{basichLearningOptimizeAutonomy2020, inagakiAdaptiveAutomationSharing2001, shannon2017human, zhao2023contributionprefs, zhao2024conformaldagger}. For instance, \citet{liuRobotLearningJob2023} propose a learning and deployment framework in which the human monitors the robot's performance to intervene and provide corrections. Data from deployment is then used in subsequent rounds to improve the robot's policy.
\citet{vats2022synergistic} use a mixed integer program to decide when to learn new skills and when to delegate tasks to the user. While their method accounts for the future utility of learning a task, they do so without also considering the human's preferences. \citet{racca2019teacher} argue that active learning methods can increase the cognitive effort of human teachers by asking difficult questions that require frequent context change. This point suggests that sample-efficient learning cannot be the sole end goal; the difficulty of answering each question should be taken into account as well. While some of the aforementioned methods do model different costs associated with querying, our method is the first to do so across a stream of incoming tasks  while considering human preferences as well.


\section{Problem Statement}
A robot is asked to collaborate with a user to jointly complete a sequence of physical tasks $\seq{\tau}{i}{1}{n} = \tau_i,\ldots,\tau_n$, e.g., picking objects from a conveyor and sorting them into bins according to hidden user preferences. Each task is described by a vector $\tau_i \in \mathcal{T}$ and has an associated reward function:

\begin{align}
\begin{split}
r^i_{\text{task}} &= r^i_{\text{safe}} + r^i_{\text{pref}} \\
                  &= -c_\text{skill-fail} \cdot \mathbb{I}[\text{safety violated}] + r^i_{\text{pref}} \\
\end{split}
\end{align}

The \emph{safety reward} $r_{\text{safe}}$  models user-agnostic safety constraints that the robot must not violate (e.g., avoid collisions, do not drop objects). The \emph{preference reward} $r_{\text{pref}}$ measures how closely robot actions match user preferences (e.g., placing a plate on the right shelf, not placing fingers inside a mug). Initially, $r_{\text{pref}}$ is unknown to the robot but can be learned by querying the human.
In addition, the robot starts with a base set of skills $\mathcal{L}_0 :=\{\pi_0\}$ which may not cover all tasks.
Hence, the robot should either request the user to handle such tasks or ask the user to teach it new skills such that it can then complete them autonomously.
Our goal is to complete all the tasks while minimizing the burden placed on the user during the interaction.
The human effort required to respond to each type of robot query is quantified using costs: $c_\text{hum}$ for assigning a task to the human, $c_\text{skill}$ for requesting the human to teach a motor skill, and $c_\text{pref}$ for requesting their preference.
Hence, the overall objective is to maximize $J$:

\begin{align}
    J &= \sum_{i=1}^n{r^i_{\text{task}} - c^i_{\text{query}} - c^i_{\text{rob}}},
\end{align}
where $c^i_\text{query} = c^i_\text{hum} + c^i_\text{skill} + c^i_\text{pref}$ measures the cost of querying the human, and $c^i_{\text{rob}}$ is a fixed cost incurred if the robot undertakes the task.
Note that $\max J \equiv \min -J$. We will use this fact later to convert this into a minimization problem.

\section{Approach}
We model each task as a hidden-parameter MDP (HiP-MDP), where some parameters of the reward function that capture user preferences are uncertain. 
Each task HiP-MDP is $\mathcal{M} := (S, A, \Theta, r_\text{task}(s,a ; z), T, P_{\Theta})$, where $\Theta$ is a discrete set of possible reward parameters and $P_{\Theta}$ is the prior over these parameters. The robot updates its posterior $b^{\theta}$ over the parameters based on its interaction with the human. $z$ is a hidden distribution over reward parameters in $\Theta$ and represents the user's preferences for the given task. The reward for each task $r_\text{task}(s,a ; z)$ is thus a function of the hidden preference distribution. 
We model the choice of acting autonomously and the three types of queries that the robot can make as four distinct actions that comprise the robot's action space $A$:
\begin{enumerate}
\item \emph{Execute Skill.} The robot executes a skill to complete the task. This is modeled by a \emph{rob} action which has a fixed cost $c_\text{rob}$ and a variable task reward $r_\text{task}$ that captures how well the robot completed the task.
\item \emph{Query type 1: Request Skill.} Ask the user to teach a new motor skill for a task via demonstrations. The user's effort to perform this query is captured by the cost $c_\text{skill}$. If the robot fails to learn from the human, then the query is deemed a \emph{teaching failure}.
\item \emph{Query type 2: Request Human.} Ask the user to complete the task. The user effort required to fulfill this request is captured by the cost $c_\text{hum}$.
\item \emph{Query type 3: Request Preference.} Ask the user for their preference about how a task should be done. The user effort is captured by the cost $c_\text{pref}$. Some prior work define ``preference query'' as a comparative choice between two possible options. In our case, the user chooses a preferred option from all available options. We will refer to it as a ``preference request'' to avoid confusion.
\end{enumerate}


\textbf{Robot Skills.} The robot learns a library $\mathcal{L}$ of task-specific skills $\pi(s, \theta)$ that are parameterized by task and user preference parameters.
Define the expected return for executing skill $\pi$ with a specific preference parameter $\theta$ to complete task $\tau_i$ as 
\[R^i(\pi, \theta) \leftarrow \sum_{a \sim \pi(s, \theta)} r^i_\text{task}(s, a \vert \theta).\]
To complete $\tau_i$, the robot must not only select which skill to use, but also which preference parameter to execute said skill with.
Let the robot have skills $\mathcal{L} :=\{\pi_1,\ldots,\pi_k\}$.
Then, the robot must choose a combination of skill and preference parameter $\pi^*, \theta^*$ with the highest return:
\begin{align}
\pi^*, \theta^* 
                = \argmax_{\pi \in \mathcal{L},\theta \in \Theta}
    R^i(\pi, \theta).
\end{align}

\textbf{Skill Return Model.}
The true reward function is unknown to the robot at planning time.
Hence, the robot must use a reward model to predict the expected return from skill execution before learning it.
Let the robot's belief about the user's preference for the current task be $b^{\Theta}$.
We provide the robot with a domain-specific function $\rho_\pi^\text{safe}$ to predict the probability of safe execution. $\rho_\pi^\text{safe}(\tau', \theta')$ predicts the probability of safe execution of a skill $\pi$ when deployed on a task $\tau'$ with parameter $\theta'$.
This function is defined based on similarity between $(\tau', \theta')$ and the conditions $(\tau, \theta)$ that $\pi$ will be learned for: $\rho_\pi^\text{safe}(\tau', \theta') = f(\lVert \tau - \tau'\rVert + \lVert \theta - \theta'\rVert)$. See Appendix \ref{ap_sec:skill_return} for our instantiation in experiments.
Intuitively, the probability of skill success is high if both tasks and preferences are similar and low otherwise.
The preference belief $b^\Theta$ models the probability of preference satisfaction.
The skill return model is then:
\begin{align}
\begin{split}
    \hat{R}^\tau(\pi, \theta') = -[&c_\text{skill-fail} \cdot
    (1 -  \lambda_\text{teach} \cdot \rho^\text{safe}_\pi(\tau', \theta')) + \\
    &c_\text{pref-fail} \cdot (1 - b^\Theta(\theta'))].
\end{split}
\end{align}

The first term predicts the expected penalty for safety violation and the second terms predicts the penalty for preference violation.
$\lambda_\text{teach}$ estimates the probability that the robot will successfully learn a skill from the human.
We model the success of teaching as a Bernoulli process and use Bayesian inference to compute the posterior distribution with a  Beta distribution $\text{Beta}(\alpha, \beta)$ as the prior.
We use the mean of the distribution in the return model as the estimated probability of successful teaching:
\[\lambda_\text{teach} = \mathbb{E}[{\text{Beta}(\alpha, \beta)}] = \frac{\alpha}{\alpha + \beta}.\]
This enables the robot to identify difficult-to-learn tasks and \emph{adapt} its plan online if teaching fails so that it does not waste human time. Our experiments will show that non-adaptive methods expend significant human effort trying to learn such tasks even after repeated teaching failures whereas COIL assigns them to the human and focuses on learning feasible tasks.

\subsection{Skill Learning Under Known Preferences: Facility Location Formulation}
We first consider the case in which the robot cannot query the human about their preference online and must plan with its belief about the preferences. Previous work by \citet{vats2022synergistic} proposed a mixed integer programming formulation ADL to plan in a similar setting. Unfortunately, solving MIP optimally is NP-hard which makes it difficult to scale ADL to larger problems. We propose a novel uncapacitated facility location (UFL) formulation that has tight polynomial-time approximation algorithms which can efficiently compute high-quality solutions even for large problems. Intuitively, the challenge of identifying the minimal cost set of interactive actions that cover the full task sequence maps nicely onto the UFL problem which seeks to service demands (tasks) by allocating facility resources (interactive actions) while minimizing costs. 

\textbf{Facility Location Problem.}
We first briefly introduce the uncapacitated facility location (UFL) problem.
Please refer to \citet[chapter~4]{williamson2011design} for a more detailed treatment.
The facility location problem has a set of demands $D =\{1,\ldots,m\}$ and a set of facilities $F = \{1,\ldots,n\}$.
There is \emph{facility cost} $f_i$ associated with opening each facility $i \in F$ and an \emph{assignment} or \emph{service cost} $c_{ij}$ of serving demand $j$ by facility $i$.
The goal is to serve all the demands by opening a subset of facilities $F' \subseteq F$ such that the overall  cost of opening the facilities in $F'$ and the cost of assigning each demand $j \in D$ to the nearest facility $i \in F'$ is minimized:

\begin{equation}
\min \underbrace{\sum_{i\in F'} f_i}_\text{facility cost}+ \underbrace{\sum_{j\in D} \min_{i\in F'} c_{ij}}_\text{service cost}.
\label{eq:facility_obj}
\end{equation}


We formulate the interaction planning problem as a facility location problem by defining demands, facilities, facility costs and service costs $\langle D, F, f_i, c_{ij} \rangle$.

\begin{figure}[t]
\centering
\includegraphics[width=1\columnwidth]{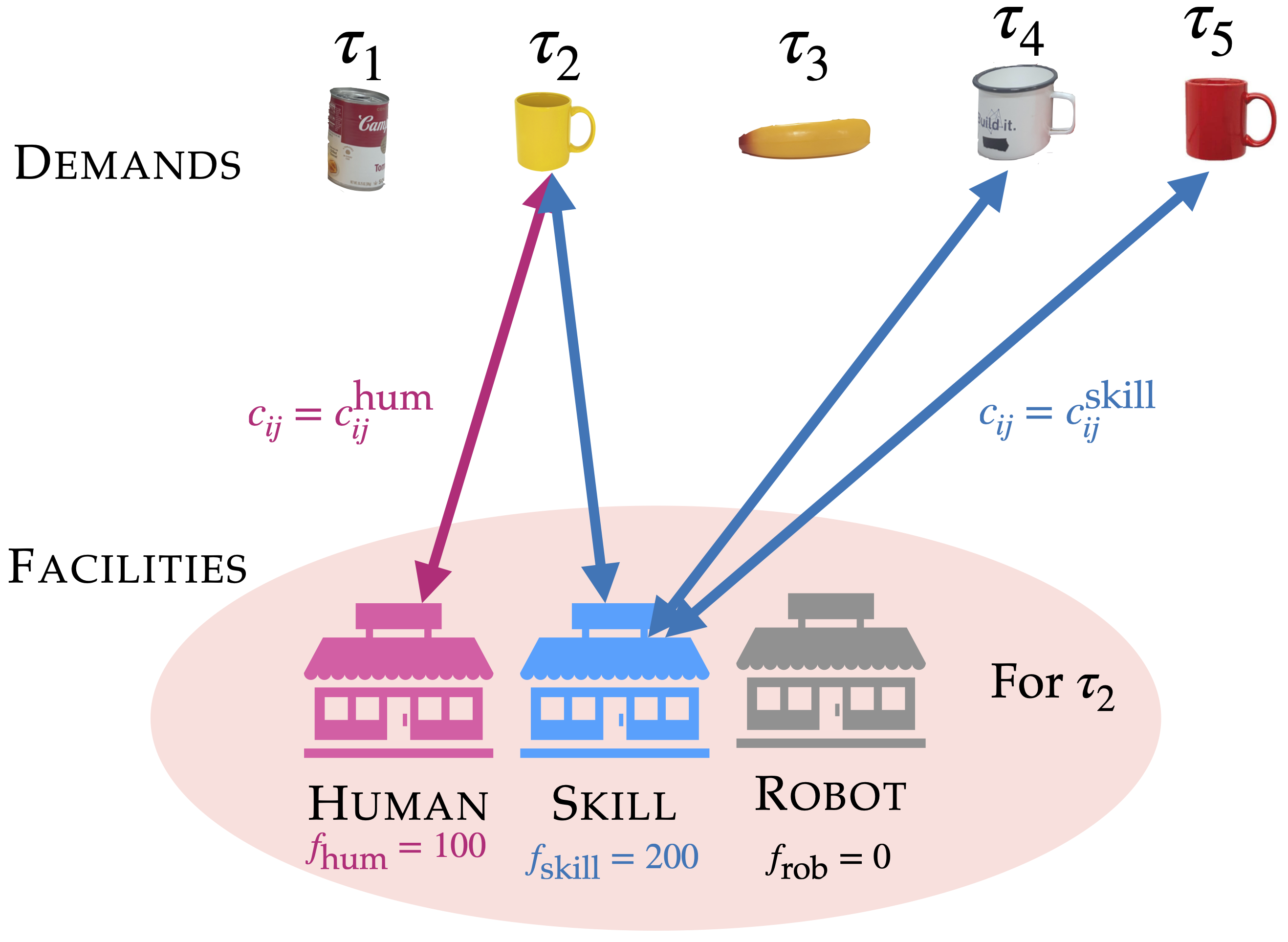}
\caption{Facility location formulation. Tasks $\tau_1,\ldots,\tau_5$ are demands to be satisfied. Facilities correspond to interactive actions available for every task. We highlight facilities for $\tau_2$: \emph{Human} facility can only service $\tau_2$. \emph{Skill} facility can service similar future tasks $\tau_2, \tau_4, \tau_5$. \emph{Robot} facility cannot service any task as the robot hasn't learned a skill yet. Furthermore, none of the facilities can service past tasks.}
\label{fig:ufl_formulation}
\end{figure}

\textbf{Tasks as Demands.} Define the set of demands as the set of tasks $F = \{\tau_1,\ldots,\tau_N\}$. A solution to UFL has to satisfy all the demands. Hence, this guarantees that it will complete all the tasks. 

\textbf{Robot Actions as Facilities.}
In the following, we define facilities corresponding to all the actions for every task $\tau_i$.

 \begin{enumerate}
\item \emph{Human Facilities.} Define one \emph{human} facility $i_\text{hum}$ with a facility cost of $c_\text{hum}$ that captures the cost of asking the user to complete the task. The service cost to serve $\tau_i$ is $0$ as human effort is modeled by $c_\text{hum}$ and  $\infty$ for all the other tasks as this action cannot not complete them.
\item \emph{Skill Facilities.} Define one \emph{skill} facility $i_\text{skill}$ with a facility cost of $c_\text{skill}$ that corresponds to requesting the user to teach the robot a new skill $\pi_i$ for $\tau_i$.
The service cost for completing a task $\tau_j$ consists of a fixed robot execution cost $c_\text{rob}$ and the task reward
\[c^\text{skill}_{ij} = c_\text{rob} - \max_{\theta \in \Theta} \hat{R}^j(\pi_i, \theta).\]
The robot executes $\pi_i$ with preference parameters $\theta^*$ that maximize the expected reward under its preference belief distribution.

\item \emph{Robot Facilities.} Define one \emph{robot} facility $i_\text{rob}$ with a facility cost of $0$ that corresponds to the robot using a previously learned skill $\pi_i$. The service cost for completing a task $\tau_j$ consists of a fixed robot execution cost $c_\text{rob}$ and the task reward as above.
\end{enumerate}


\begin{algorithm}[t]
\caption{Facility location formulation for COIL.}
\label{alg:adl}
    \begin{algorithmic}[1]
    \Procedure{COIL-KnownPrefs}{$\seq{b}{i}{1}{N}$}
        \State{$D \gets \{\tau_1,\ldots,\tau_N\}, F \gets \emptyset$ \Comment{demands and facilities}}
        \For{$i \in 1,\cdots,N$}
            \State{$F.\text{insert}(i_\text{hum})$, $f_i^\text{hum} \gets c_\text{hum}$} \Comment{human facility}
            \State{$c_{ij}^\text{hum} \gets 0$ if $i = j$ else $\infty$} 
            \State{$F.\text{insert}(i_\text{skill}), f_i^\text{skill} \gets c_\text{skill}$} \Comment{skill facility}
            \For{$j \in {i,\ldots,N}$}
                \State{$c^{\text{skill}}_{ij} = c_\text{rob} - \max_{\theta \in \Theta} \hat{R}^j(\pi_i, \theta)$}
            \EndFor
        \EndFor
        \For{$i \in \mathcal{L}$} \Comment{skills already learned}
            \State{$F.\text{insert}(i_\text{robot}), f_i^\text{robot} \gets 0$} \Comment{robot facility}
            \For{$j \in {1,\ldots,N}$}
                \State{$c^{\text{robot}}_{ij} = c_\text{rob} - \max_{\theta \in \Theta} \hat{R}^j(\pi_i, \theta)$}
            \EndFor
        \EndFor
        \State{$\seq{a}{i}{1}{N}, J \gets \textsc{solveUFL}(D, F, f, c)$}
        \State{}\Return{$\seq{a}{i}{1}{N}, J$}
    \EndProcedure
    \Procedure{COIL}{$\seq{b}{i}{1}{N}$}
        \State{$\seq{a}{i}{1}{N}, J \gets \textsc{COIL-KnownPrefs}(\seq{b}{i}{1}{N})$}
        \State{$\bar{J} \gets 0$}
        \For{$\theta \in \Theta$} \Comment{possible query response} \label{line:loop_over_prefs}
            \State{$\seq{b'}{i}{1}{N} \gets \textsc{UpdatePrefBeliefs}(\theta)$}
            \State{$J' \gets \textsc{COIL-KnownPrefs}(\seq{b'}{i}{1}{N})$}
            \State{$\bar{J} = \bar{J} + b_1(\theta) \cdot J'$}
        \EndFor
            \If{$c_\text{pref} + \frac{\bar{J}}{\sum b_1} \leq J$} \label{line:pref_cost_check}
            \State{$a_1 \gets a_\text{pref}$}
            \EndIf
        \Return{$\seq{a}{i}{1}{N}$}
    \EndProcedure
    \Procedure{Interaction}{$\seq{\tau}{i}{1}{N}$}
    \State{$\seq{b}{i}{1}{N} \leftarrow \textsc{initPrefBelief}()$}
    \State{$k \leftarrow 1$} \Comment{current task index}
    \While{all tasks are not done}
        \State{$\text{plan} \leftarrow \textsc{COIL}(\seq{b}{i}{k}{N})$} \Comment{replan}
        \State{execute the first action and update $k$}
        \State{update $\lambda_\text{teach}$}
    \EndWhile
    \EndProcedure
    \Return
    \end{algorithmic}
\end{algorithm}

\textbf{Approximation Algorithms.}
We implement the approximation algorithm proposed by \citet{jain1999primal}. In the worst case, this algorithm is 3-suboptimal when the facility location problem is metric and $\log(n)$-suboptimal otherwise.
This algorithm has a run-time of $O(n^2\log(n))$, where $n$ is the number of demands.
In practice, we found it to be near-optimal for our problems.
Let $X$ be the set of facilities opened so far, and let $U$ be the set of demands that are not served by open facilities. Then, the algorithm iteratively picks a demand $i \in F$ and $Y \subseteq U$ that minimizes the ratio $\frac{f_i + \sum_{j\in Y}c_{ij}}{\lvert Y \rvert}$ and sets $f_i$ to $0$. Intuitively, it opens a facility that has the minimum cost per demand for some subset of demands and assigns those demands to the opened facility.
\subsection{Planning for Preference Requests}
Equipped with a provably bounded-suboptimal plan for learning under known preferences as a reference point, COIL then determines when preference requests are needed to clarify user preferences before execution. For each task, COIL evaluates the expected change, under its current beliefs $b^{\theta}$, in overall plan cost should it request the user's preference for the current task. If the expected plan cost plus preference request cost is lower than the current plan (line~\ref{line:pref_cost_check}), COIL elects to first reduce uncertainty in the preference parameters for the current task. In this way, COIL augments its bounded-suboptimal reference plan  with uncertainty-guided preference requests when necessary, towards improving the plan to handle uncertainty in preference parameters.
During the interaction, we replan after every interactive action execution to take into account the latest information. This is enabled by the fast run-time of COIL which is polynomial in the number of tasks and preference parameters.

\begin{theorem}
The worst-case runtime of COIL is polynomial in $n$ and $k$, where $n$ is the number of tasks and $k = \lvert \Theta \rvert$ is the number of possible preference parameters.
\end{theorem}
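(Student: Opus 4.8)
The plan is to bound the cost of each procedure in Algorithm~\ref{alg:adl} separately and then compose the bounds, since \textsc{COIL} is built by calling \textsc{COIL-KnownPrefs} inside a single loop over $\Theta$. The argument is essentially a bookkeeping exercise: identify the size of the UFL instance that gets constructed, charge the approximation solver against that size, and then track how many times the whole pipeline is re-invoked.

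First I would analyze \textsc{COIL-KnownPrefs}, whose work splits into (i) constructing the UFL instance $\langle D,F,f,c\rangle$ and (ii) the call to \textsc{solveUFL}. The demand set has exactly $n$ elements. The facility set contains one human and one skill facility per task, plus one robot facility per previously learned skill; since the robot starts from $\mathcal{L}_0=\{\pi_0\}$ and learns at most one skill per completed task, $\lvert\mathcal{L}\rvert = O(n)$, so $\lvert F\rvert = O(n)$. The only nontrivial per-entry work is evaluating a service cost $c_{ij}=c_\text{rob}-\max_{\theta\in\Theta}\hat{R}^j(\pi_i,\theta)$, a maximization over the $k$ elements of $\Theta$ costing $O(k)$. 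There are $O(n^2)$ such entries for skill facilities and $O(\lvert\mathcal{L}\rvert\, n)=O(n^2)$ for robot facilities, so building the instance takes $O(n^2 k)$ time. Running the \citet{jain1999primal} approximation then costs $O(n^2\log n)$ as stated in the excerpt, which is polynomial in the instance size because $\lvert D\rvert,\lvert F\rvert=O(n)$. Hence \textsc{COIL-KnownPrefs} runs in $O(n^2 k + n^2\log n)$ time, polynomial in $n$ and $k$.

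Next I would bound \textsc{COIL} itself. It performs one initial call to \textsc{COIL-KnownPrefs}, then a loop over the $k$ preference parameters $\theta\in\Theta$ (line~\ref{line:loop_over_prefs}); each iteration performs a belief update \textsc{UpdatePrefBeliefs}, which is polynomial as it touches $O(n)$ tasks and $O(k)$ parameters, and one further call to \textsc{COIL-KnownPrefs}. The expected-cost accumulation and the final comparison on line~\ref{line:pref_cost_check} add only $O(k)$ overhead. Thus the total cost of \textsc{COIL} is $(k+1)$ times the polynomial cost of \textsc{COIL-KnownPrefs} plus lower-order terms, i.e.\ $O\!\left(k\,n^2(k+\log n)\right)$, which is polynomial in $n$ and $k$.

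The main obstacle — the only place the argument is not a mechanical composition — is controlling the two quantities that could a priori blow up: the number of facilities and, if one also counts the outer \textsc{Interaction} loop, the number of replanning rounds. The facility count is handled by the observation above that at most one skill is added per task, keeping the UFL instance polynomially sized. For the outer loop I would argue that each round executes exactly one action and makes monotone progress: a task is completed after at most one preference request followed by one execution or assignment action, since the cost test on line~\ref{line:pref_cost_check} can fire at most once per task — after the preference is revealed, the belief $b^\Theta$ for that task is no longer uncertain, so the expected-cost comparison can no longer favor a repeat request. Hence the while loop runs $O(n)$ times, and multiplying the per-round cost of \textsc{COIL} by $O(n)$ keeps the overall runtime polynomial in $n$ and $k$, completing the argument.
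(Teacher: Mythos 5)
Your proof is correct and follows essentially the same route as the paper's: bound a single call to \textsc{COIL-KnownPrefs} as polynomial because \textsc{solveUFL} is the $O(n^2\log n)$ approximation algorithm, then multiply by the $k+1$ invocations made in the loop over $\Theta$. You additionally make explicit several points the paper's one-line proof leaves implicit --- the $O(n^2k)$ cost of constructing the UFL instance, the $O(n)$ bound on the number of facilities, and the $O(n)$ bound on replanning rounds in the outer \textsc{Interaction} loop --- none of which changes the conclusion.
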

\begin{proof}
    COIL solves the facility location problem $k = \lvert \Theta \rvert + 1$ times (see line~\ref{line:loop_over_prefs}) in total by calling the function $\textsc{COIL-KnownPrefs}$. The complexity of $\textsc{COIL-KnownPrefs}$ is polynomial when $\textsc{solveUFL}$ is an approximation algorithm.  In particular, we use an approximation algorithm with runtime $O(n^2\log(n)$, where $n$ is the number of tasks. Hence, the overall complexity of COIL is $O(k(n^2\log(n))$, which is polynomial in $n$ and $k$.
\end{proof}

\section{Experimental Setup} \label{sec:experiments} 
To evaluate the efficacy of our proposed approach, we run a series of quantitative experiments to study how \coil affects the costs of learning on the job during task execution compared with baselines. We further investigate quantitatively the nature of queries throughout the interaction to understand how \coil and other approaches induces different interactive behaviors. 

\para{Domains} We study the combined learning and execution costs incurred by our approach through experiments in three controlled environments: an object pickup and dropoff \textbf{Gridworld} environment implemented using MiniGrid~\citep{MinigridMiniworld23}, a simulated \textbf{7DoF Manipulation} environment implemented using robosuite~\citep{robosuite2020}, and a real-world \textbf{Conveyor} manipulation setting. Brief descriptions of each domain are below; please see the appendix \ref{ap_sec:envs} for details.
\begin{enumerate}
\item \textbf{Gridworld} is a discrete, 17x17 grid comprised of a sequence of 15 total objects of nine varieties distinguished by object \emph{type}, \emph{color}, and \emph{position}. Each object defines a task the agent must execute, and each task is to navigate to an object, pick it up, and transport it to the user's preferred location for that object. There are three possible goal locations.
\item \textbf{7DoF Manipulation} is a simulated bin-packing task where a Franka robot manipulator must pick up and put 30 total objects of seven different varieties (e.g., milk carton, mug) into one of four different bins. Each object defines a task the agent must execute.
\item \textbf{Conveyor} is a real-world instantiation of a factory setting in which a Franka tabletop manipulator and a human employee work side by side to sort objects arriving on a conveyor belt into three containers. Some of the objects, such as the mug, have two possible grasps: handle and rim. The robot must pick each object using the preferred grasp and place it in the preferred bin. We randomly sample 5 task sequences consisting of 20 tasks each. We pretrain GEM~\cite{jia2024open} using human demonstrations for every task and use it to provide demonstrations in our experiments.
\end{enumerate}

\para{Baselines} We compare \coil with state-of-the-art approaches for interactive robot learning and task allocation. Implementation details for each baseline can be found in the Appendix \ref{sec:baselines}.
\begin{enumerate}
    \item \emph{Confidence-based ADL (\cadl).} ADL~\citep{vats2022synergistic} addresses the similar problem of planning for skill learning and task allocation and thus makes for a strong baseline. However, ADL does not aim to learn human preferences and thus lacks the ability to reason jointly over preference and skill learning. To ensure a fair comparison, the \cadl baseline makes preference queries when the robot's confidence over the human's preference is below some predefined threshold.
    
    \item \emph{Information Gain (\info).} Because \coil plans over multiple query types (skill, preference) and multiple human contribution types (human, robot), it makes sense to design a baseline inspired by interactive learning methods that incorporate multiple types of human feedback (e.g., demonstrations, preferences, etc.). In these methods, a widely-used approach is to select the query which provides the greatest expected information gain \cite{fitzgeraldINQUIREINteractiveQuerying2022}. While these methods typically learn reward functions for a single task and consider neither human contributions nor the future utility of learning particular skills, we design an information gain objective which does both as the \info baseline in our interaction paradigm.
    
    \item \emph{Confidence-based Autonomy (\cba).} Inspired by~\citet{chernovaConfidencebasedPolicyLearning2007}, \cba requests skill and preference teaching if the robot is uncertain about user preferences or skills and otherwise assigns the task either to itself or the human.
    
\end{enumerate}

\para{Human Cost Profiles}
The costs a human might associate with teaching, executing tasks, or responding to preference requests depend on domain-specific variables such as the cost of labor and the difficulty of teaching. We study the performance of our approach under multiple simulated users with three different cost profiles. In our evaluations, we assign a cost of $c_\text{rob}=10$ to each robot execution, $c_\text{hum}=80$ to human task execution, and $c_\text{pref}=20$ to each preference request. The robot incurs a penalty cost of $c_\text{skill-fail}=100$ if it fails to successfully execute any skill, and a penalty of $c_\text{pref-fail}=100$ if it successfully executes a skill by placing the object in a goal undesired by the user. 
To compare the profiles with minimal cost tuning, we examine profiles across the cost of human skill teaching.
\begin{enumerate}
    \item \emph{Low-Cost Teaching} ($c_\text{skill}=50$): This profile simulates an experienced teacher for whom providing demonstrations of robot skills is not burdensome. 
    \item \emph{Med-Cost Teaching} ($c_\text{skill}=100$): This profile simulates a teacher for whom teaching is moderately more burdensome than performing the task themselves.
    \item \emph{High-Cost Teaching} ($c_\text{skill}=200$): This profile simulates a novice teacher for whom providing demonstrations of robot skills is highly burdensome. 
\end{enumerate}

\textbf{Preference Belief Estimation.} The robot maintains a belief estimate $b^{\Theta}$ of the user's hidden personal preferences for every task. The probability that each preference parameter $\theta$ satisfies user preference, i.e., $b^{\Theta}(\theta)$, is modeled as a Bernoulli distribution. These belief estimates are updated based on feedback from the user using a Bayesian filter (described in Appendix \secref{sec:pref_filter}).

\section{Experimental Results} \label{sec:results} 
We summarize our results through four major takeaways derived from five distinct experimental focuses: (A) cost comparisons under different human cost profiles when all task skills are learnable by the robot, (B) scalability and computational costs of \coil under long task sequences, (C) online-adaptiveness of \coil when challenging-to-learn skills necessitate online replanning, and (D) a real-world instantiation of \coil that enables the human-robot team to teach and learn as they manipulate objects which arrive on a conveyor.


\begin{figure}[!tb]
    \centering
    \includegraphics[width=0.99\columnwidth]{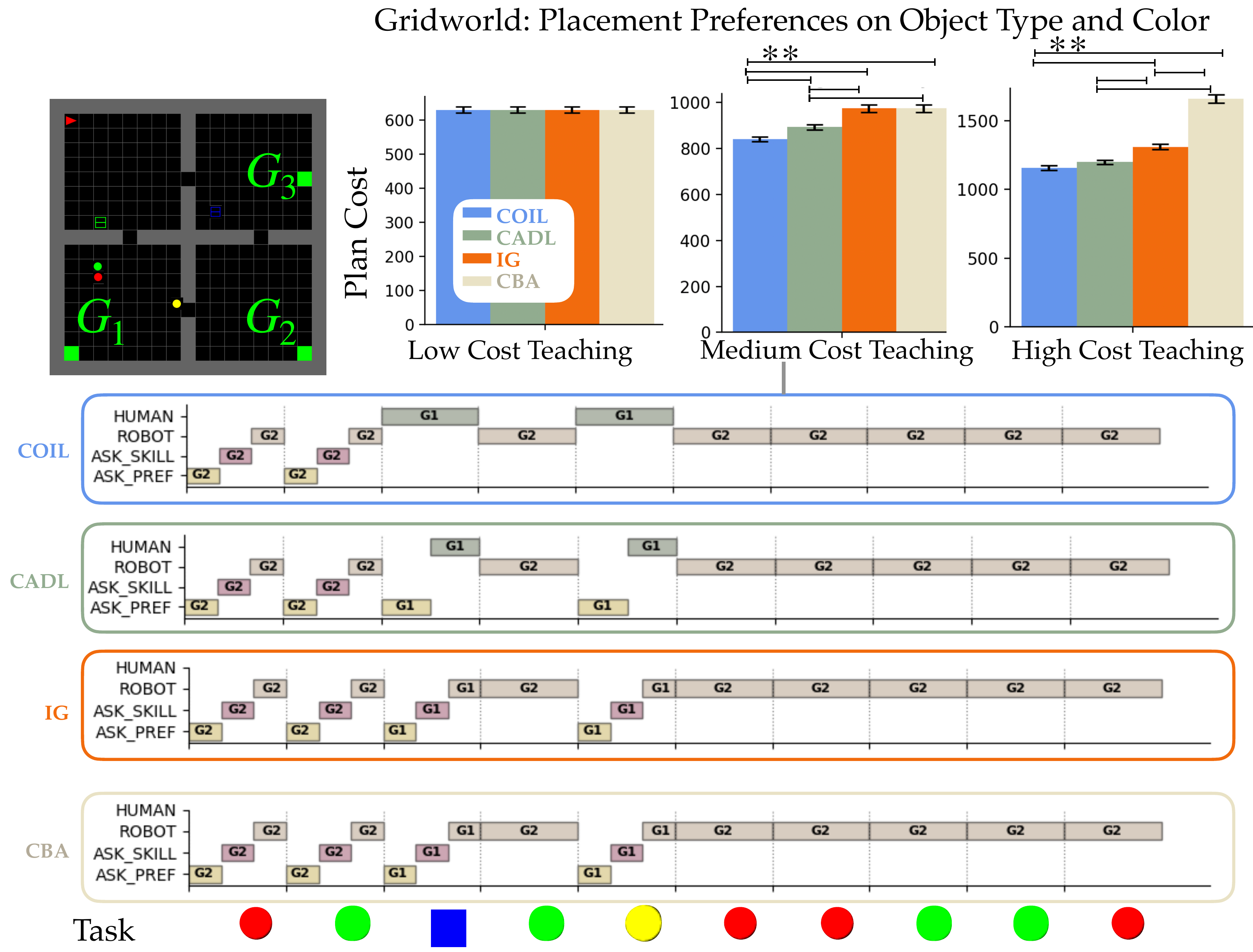}
    \caption{Under Med- and High-Cost teaching cost profiles, \coil consistently chooses the lowest cost plan compared with baselines. We highlight the qualitative behavior of \coil compared to baselines: \coil under the medium cost teaching profile assigns singleton tasks to the human when the cost of learning is high. Error bars indicate standard error over 30 randomized task sequences and true human preferences in the Gridworld domain. ** represents $p<0.01$ significance.}
    \label{fig:gridworld_res}
\end{figure}

\begin{table}[]
\centering
\footnotesize
\resizebox{0.8\textwidth}{!}{\begin{minipage}{\textwidth}
\begin{tabular}{@{}lllllll@{}}
\toprule
    Cost  & Algo & \#teach & \#human & \#pref                    & \#robot &  Cost\\ 
    \midrule
Low  & COIL &  4.43 (0.88)       & 2.43 (1.26)     & 4.43 (0.88) & 12.57 (1.26)   & \textbf{630.67} (50.26)     \\
& C-ADL  & 6.87 (0.72)      & 0.0 (0.0)       & 6.87 (0.72) & 15.0 (0.0)  & 630.67 (50.26)     \\
& IG  & 6.87 (0.72)      & 0.0 (0.0)      & 6.87 (0.72) & 15.0 (0.0)   & 630.67 (50.26)   \\
& CBA  & 6.87 (0.72)      & 0.0 (0.0)      & 6.87 (0.72) & 15.0 (0.0)   & 630.67 (50.26)   \\

\midrule
Med  & COIL & 4.13 (0.85)     & 2.77 (1.26)      & 4.13 (0.85) & 12.23 (1.26)   & \textbf{839.67} (53.7)   \\
& C-ADL  & 4.17 (0.86)       & 2.7 (1.29)      & 6.87 (0.72) & 12.3 (1.29)  & 893.0 (60.84)     \\
& IG  & 6.87 (0.72)   & 0.0 (0.0)       & 6.87 (0.72) & 15.0 (0.0)   & 974.0 (86.16)   \\
& CBA  & 6.87 (0.72)   & 0.0 (0.0)       & 6.87 (0.72) & 15.0 (0.0)   & 974.0 (86.16)   \\

\midrule
High  & COIL & 0.37 (0.66)       & 13.2 (3.12)      & 0.53 (0.67) &  1.8 (3.12)   & \textbf{1158.0} (85.42)  \\
& C-ADL  & 2.33 (0.7)      & 6.37 (1.72)   & 6.87 (0.72) & 8.63 (1.72)  & 1199.67 (78.42)   \\
& IG  &  4.17 (0.86)       & 2.7 (1.29)     & 6.87 (0.72)) & 12.3 (1.29) & 1309.67 (108.27)  \\
& CBA  & 6.87 (0.72)    & 0.0 (0.0)     & 6.87 (0.72) & 15.0 (0.0)  & 1660.67 (157.96)    \\ 

\bottomrule
\end{tabular}
\end{minipage} }
\caption{In the Gridworld domain, we find that COIL makes fewer preference queries than the confidence-based baselines because COIL only asks for human preferences if it believes that this information will be useful later. Format is mean(standard deviation).}
\label{tab:grid_compare_num_prefs}

\end{table}

\begin{table}[]
\centering
\footnotesize
\resizebox{0.75\textwidth}{!}{\begin{minipage}{\textwidth}
\begin{tabular}{@{}lllllll@{}}
\toprule
    Cost  & Algo & \#teach & \#human & \#pref                    & \#robot &  Cost\\ 
    \midrule
Low  & COIL    & 6.6 (1.50) & 4.0 (3.58) & 5.6 (1.02) & 19.4 (4.29) & \textbf{1519.0} (351.86) \\
     & COIL-NoAd & 8.3 (2.9) & 0.8 (0.98) & 5.6 (1.02) & 20.9 (2.84) & 1630.0 (402.34) \\
     & C-ADL  & 8.8 (3.76) & 0.0 (0.0) & 6.4 (0.8) & 21.2 (3.76) & 1705.0 (509.91)  \\
     & IG  &  30.0 (0.0) & 0.0 (0.0) & 6.4 (0.8) & 0.0 (0.0) & 2403.0 (279.11) \\

\midrule
Med  & COIL          & 6.0 (1.095)      & 5.1 (3.91)        & 5.6 (1.0198) & 18.9 (4.4821)   & \textbf{1722.0} (406.98)    \\
     & COIL-NoAd & 7.6 (2.61)      & 1.5 (0.92) & 5.6 (1.02) & 20.9 (2.84) & 1940.0 (474.15)    \\
     & C-ADL         & 7.4 (2.97)       & 1.4 (1.2)       & 6.4 (0.8) & 21.2 (3.76)  & 1990.0 (638.12)     \\
     & IG            & 30.0 (0.0)      & 0.0 (0.0)       & 6.4 (0.8) &  0.0 (0.0)   & 3839.0 (274.02)   \\

\midrule
High    & COIL & 4.1 (0.3) & 8.0 (4.54) & 4.0 (0.0) & 17.9 (4.72) & \textbf{2099.0} (391.80) \\
        & COIL-NoAd & 5.9 (2.34) & 5.3 (2.53) & 4.0 (0.0) & 18.8 (3.79) & 2495.0 (785.94) \\
        & C-ADL  &  6.3 (2.93) & 3.0 (1.84) & 6.4 (0.8) & 20.7 (3.80) & 2644.0 (864.56) \\
        & IG  & 0.0  (0.0) & 30.0 (0.0) & 6.4 (0.8) & 0.0 (0.0) & 2528.0 (16.0) \\

\bottomrule
\end{tabular}
\end{minipage} }
\caption{Results on the manipulation domain. On average, \coil plans interactions that result in $7\%$ to $18\%$ reduction in cost compared to the best performing baseline. The improvement over baselines is particularly marked when the cost of teaching is more expensive than assigning the task to the human, i.e, medium and high cost profiles. The reported statistics are averaged over 10 interactions with 30 randomly sampled tasks each.
}
\label{tab:manip_compare_num_prefs}
\end{table}

\subsection{COIL outperforms myopic interactive learning.}
\coil significantly outperforms myopic interactive learning methods that do not consider plan over the future utility of all possible plans (Figure \ref{fig:gridworld_res}). In the Gridworld domain instantiated such that all objects are learnable by the robot, we evaluate the incurred plan costs over 30 task sequences and randomized human object arrangement preferences. Under the low-cost profile, \coil and all baselines achieve similar plan costs (no significance). As the cost of human demonstrations increases (med-cost and high-cost), \coil consistently achieves the lowest-cost plans of all the methods. We compared statistical differences using a one-way ANOVA \cite{st1989analysis} between costs achieved for each algorithm, and evaluated pairwise differences using pairwise t-tests \cite{kim2015t} with Bonferroni correction \cite{weisstein2004bonferroni} if significant main effects were present. For the Med-Cost human cost profile in the Gridworld domain, we found significant effects of planner ($F=23.48, p<0.01$), and found significant ($p<0.01$) pairwise differences between \coil and all baseline algorithms. For the High-Cost human cost profile in the Gridworld domain, we found significant effects of planner ($F=120.58, p<0.01$), and found significant ($p<0.01$) pairwise differences between \coil versus \info and \cba. Though we did not find a pairwise significant difference, we observe that \coil achieves a slightly lower cost on average compared with \cadl (Figure \ref{fig:gridworld_res}). We provide all statistical testing values in Appendix \ref{ap_sec:statistical}.

To understand the differences between the behaviors induced by each algorithm, the bottom four rows of Figure \ref{fig:gridworld_res} highlight a task sequence of 10 objects when each method collaborates with the med-cost human profile. \coil identifies the lowest-cost plan by requesting to be taught the skills and preferences associated with repeated instances of the same object and delegating singleton tasks to the human partner. \cadl identifies a plan similar to \coil, but it asks for preference queries for singleton tasks that are eventually assigned to the human, incurring additional preference cost. Under Med-Cost teaching, \info and \cba opt to learn every preference and skill (Table \ref{tab:grid_compare_num_prefs}). On the other hand, under Low-Cost teaching, all approaches choose to learn all preferences and skills, given that the total cost of doing so is not expensive. These results highlight a particular strength of \coil to balance learning, requests for human contribution, and execution especially when the right course of action under nuanced costs is not easily known.

\subsection{COIL efficiently scales to long task sequences without compromising plan quality.}
We compare the approximation algorithm used in COIL with an optimal mixed-integer programming (MIP) approach.The MIP formulation is implemented using Gurobi~\citep{gurobi}, a state-of-the-art MIP solver, while our algorithm is implemented in Python, leaving room for further improving run-time.
COIL efficiently computes near-optimal solutions significantly faster than MIP. The speedup is defined as the ratio of MIP runtime to COIL runtime, while sub-optimality is measured as the ratio of the cost of plans generated by COIL to those produced by MIP. In Figure \ref{fig:suboptimality}, we observe that the computational advantage of COIL over MIP gets more pronounced for longer task sequences without compromising on solution quality.


\begin{SCfigure}[50][!bt]
    \centering
    \includegraphics[width=0.36\textwidth]{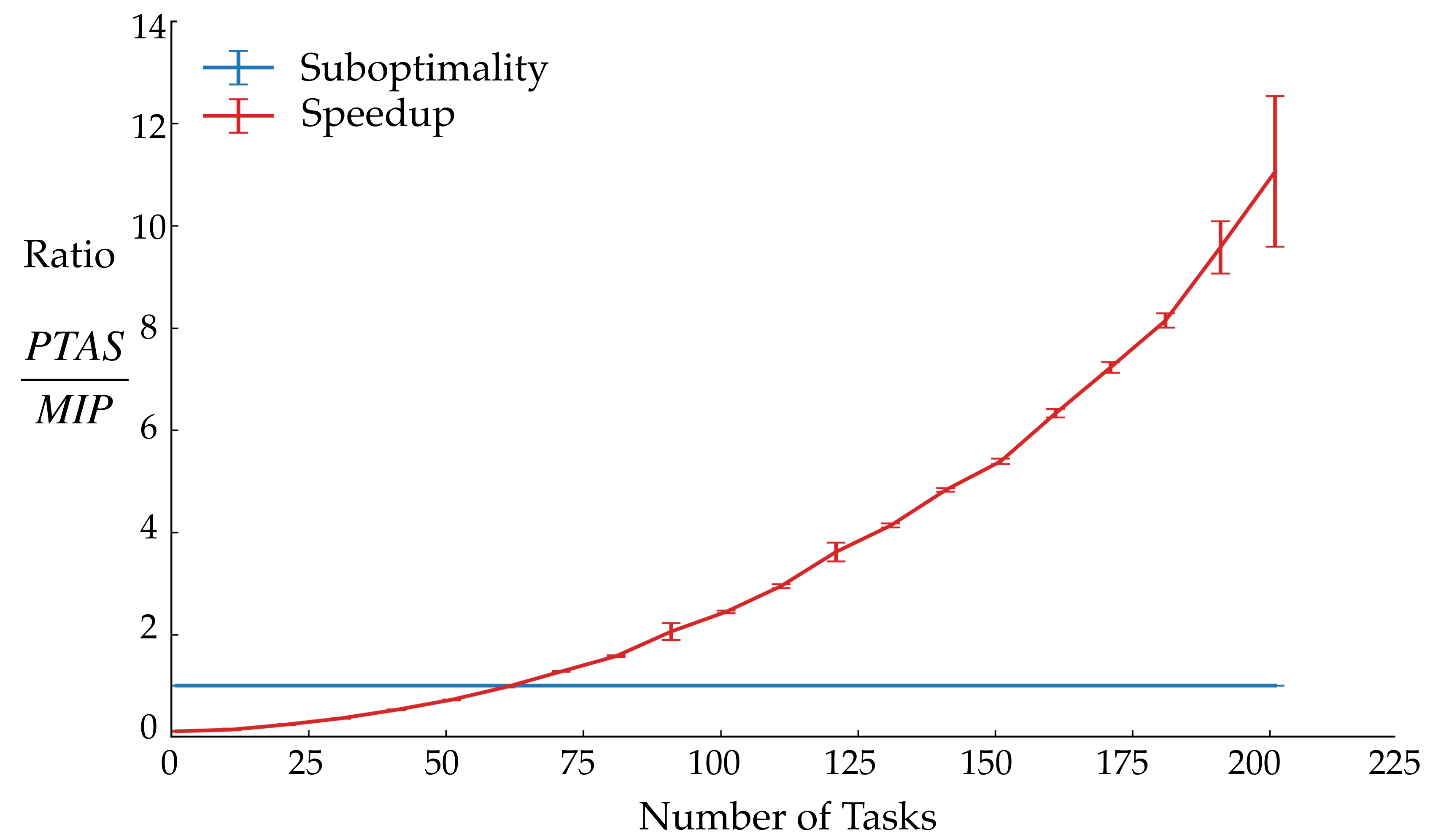}
    \caption{COIL computes near-optimal solutions significantly faster than optimal mixed integer programming.}
    \label{fig:suboptimality}
\end{SCfigure}

\subsection{Adapting to teaching failures online reduces human burden.}
Thus far, our assumption that the robot can reliably learn skills for all tasks enabled us to evaluate and compare the plan costs incurred by \coil and baselines.
While accurately modeling the feasibility of skill learning is not the focus of this work, we investigate how \coil can adapt to failures in learning with online replanning.
Importantly, these ``challenging'' skills are comparatively difficult to learn and are revealed as such only after the robot tries to acquire the skill via human demonstrations. We demonstrate these results in both the Gridworld and Simulated 7DoF Manipulation domains.

\subsubsection{COIL Adaptivity in Gridworld}
In the simulated Gridworld domain, we control for the presence of challenging skills by varying the number of skills that are challenging-to-learn (i.e., can never be learned), and examine the relationship between number of challenging skills and the benefit of estimating $\lambda_\text{teach}$ online in the \coil framework. As the proportion of challenging skills increases (from 10\% to 50\% (Fig \ref{fig:gridworld_unteachable50}) to 90\%), the added benefit of replanning based on the observed feasibility of teaching enables \coil to identify lower-cost plans than \coilnoadapt which optimistically assumes that all skills are learnable. Baselines that do not adapt to teaching failures also incur higher costs than \coil. (Fig \ref{fig:gridworld_unteachable50}). Refer to Appendix \ref{ap_sec:failures} for reported statistical analyses and detailed analysis of results on 10\% and 90\% challenging-to-objects.

\begin{figure}[!tb]
    \centering
    \includegraphics[width=0.99\columnwidth]{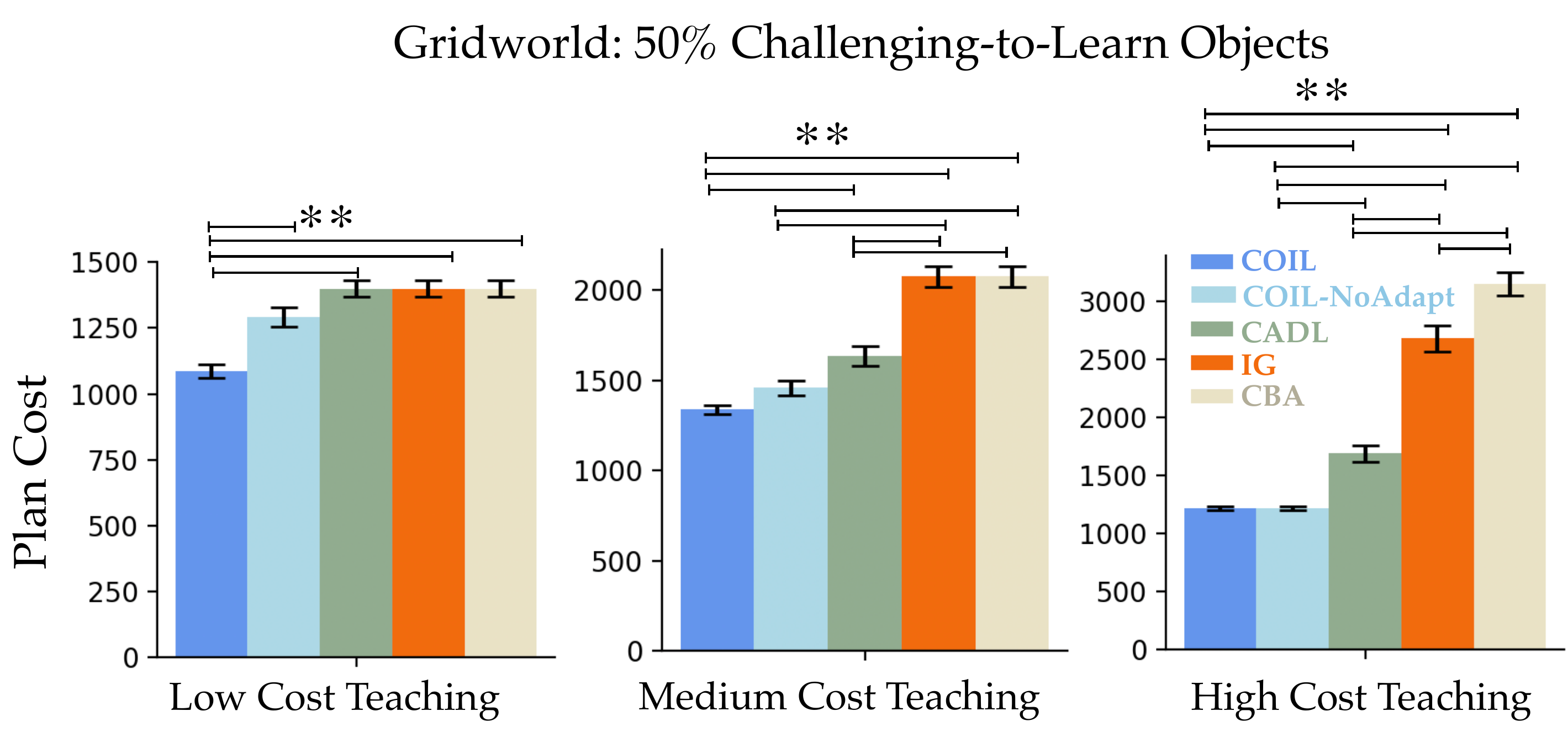}
    \caption{In the Gridworld environment, when half of the objects are challenging-to-learn, \coil achieves the lowest plans, compared to \coilnoadapt and other baselines across teaching profiles. In high cost teaching, \coil and \coilnoadapt often assign tasks to the human off the bat, reducing the impact of adaptivity. Error bars indicate standard error over 30 randomized task sequences and true human preferences. ** represents $p<0.01$ significance.}
    \label{fig:gridworld_unteachable50}
\end{figure}

\subsubsection{COIL Adaptivity in Simulated 7DoF Manipulation}
We next study the adaptivity of \coil in the simulation manipulation domain.
We consider task sequences of 30 objects, sampled from 7 unique varieties. The mug object is too wide for the robot gripper which results in a teaching failure when the human tries to teach the robot.
\coil takes this failure into account by updating $\lambda_\text{teach}$ for all mugs and adapts its plan to assign mugs to the human. This results in statistically significantly lower costs than baselines (figure \ref{fig:manip_sim_domain}, table \ref{tab:manip_compare_num_prefs}). In this experiment, we compare with our 3 strongest baselines, derived from our earlier results, removing \cba from our analysis. Statistical analyses are detailed in Appendix \ref{ap_sec:manip_statistical}.
\begin{figure}[!tb]
    \centering
    \includegraphics[width=0.99\columnwidth]{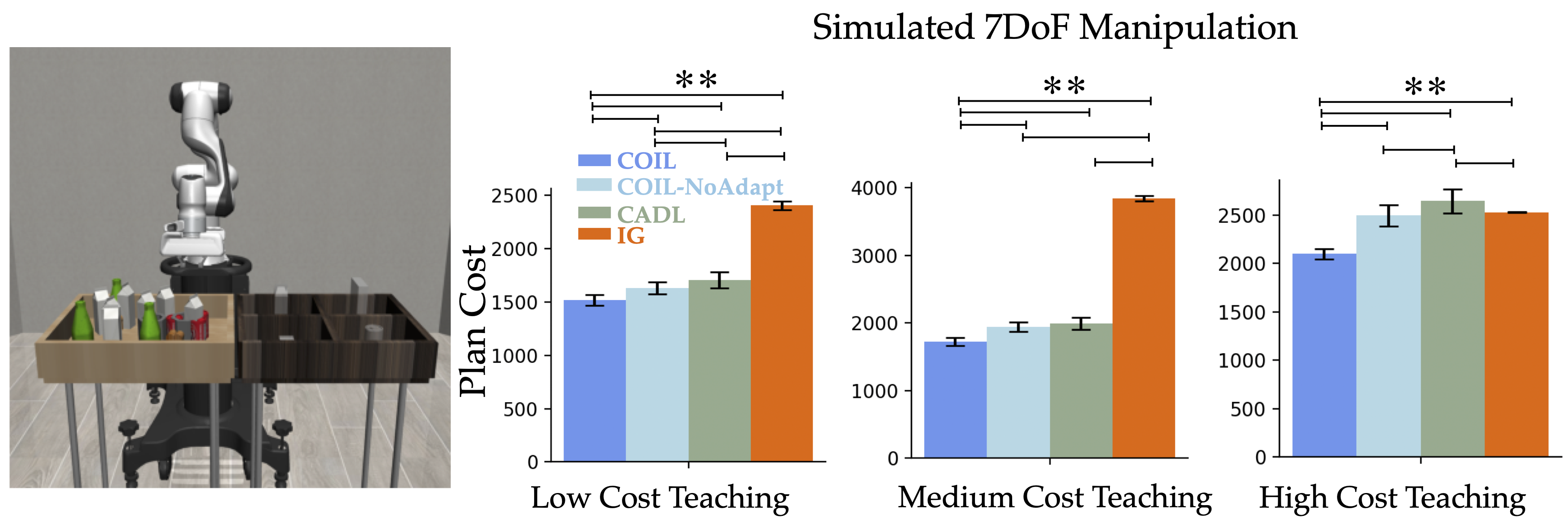}
    \caption{In the Simulated 7DoF Manipulation domain, \coil achieves significantly lower plan costs than baseline methods. Error bars represent standard error over 10 randomized initialization of the 30-object task sequence.}
    \label{fig:manip_sim_domain}
\end{figure}

\subsection{COIL succeeds in real-world operations.}

\begin{table}[!bt]
\footnotesize
\centering
\footnotesize
\resizebox{0.8\textwidth}{!}{\begin{minipage}{\textwidth}
\begin{tabular}{@{}lllllll@{}}
\toprule
     Algorithm  & \#teach & \#human & \#pref & \#robot &  Cost \\ 
    \midrule
 COIL           & 1 (1)       & 12.25 (4.65)   &  1.5 (1)  & 6.25 (3.77)   & \textbf{870} (94.52)     \\
 CADL          & 2 (1.15)       & 9.75 (3.86)  & 14 (1.63)  & 8.25 (2.75)    & 1075 (59.72)      \\
    \midrule
 COIL (teach fail)           & 1       & 19       & 1 & 0   & \textbf{1280}     \\
 CADL (teach fail)          & 6       & 10       & 16 & 4  & 2320    
\end{tabular}
\end{minipage}}
\caption{Results on a physical conveyor. We ran experiments with 5 different task sequences, each with 20 objects, with COIL and CADL. We observed teaching failure on the white mug (possibly because its shiny surface made camera-based pose estimation difficult). Hence, we report the run with teaching failure separately from the other 4 runs. COIL was able to achieve significantly lower cost than the baseline in both situations. CADL especially struggled in the case of teaching failure as it repeatedly requested to be taught the mug skill.}
\label{tab:conveyor}
\end{table}

We evaluate \coil on a physical conveyor domain designed to emulate collaboration in factories. We ran the experiments with a medium cost profile with $c_\text{hum} = 50, c_\text{skill} = 100$, 5 different task sequences, each with 20 objects randomly sampled from a set of 12 objects. Each sequence consisted of one high-frequency object which was five times more likely to be sampled than the other objects. The objects appeared one-by-one in front of the robot, at which point it needed to decide whether to autonomously pick up the object using an appropriate grasp and place it in the correct bin, or request help from the human. Some objects---such as mugs and bottles---had two feasible grasps, of which only one was preferred by the human. Similarly, the human had hidden preferences about which bin each object belonged to. The robot queried the human to understand their preferred grasp and target bin for each object.
\begin{figure}[!ht]
\centering
\includegraphics[width=1.\columnwidth]{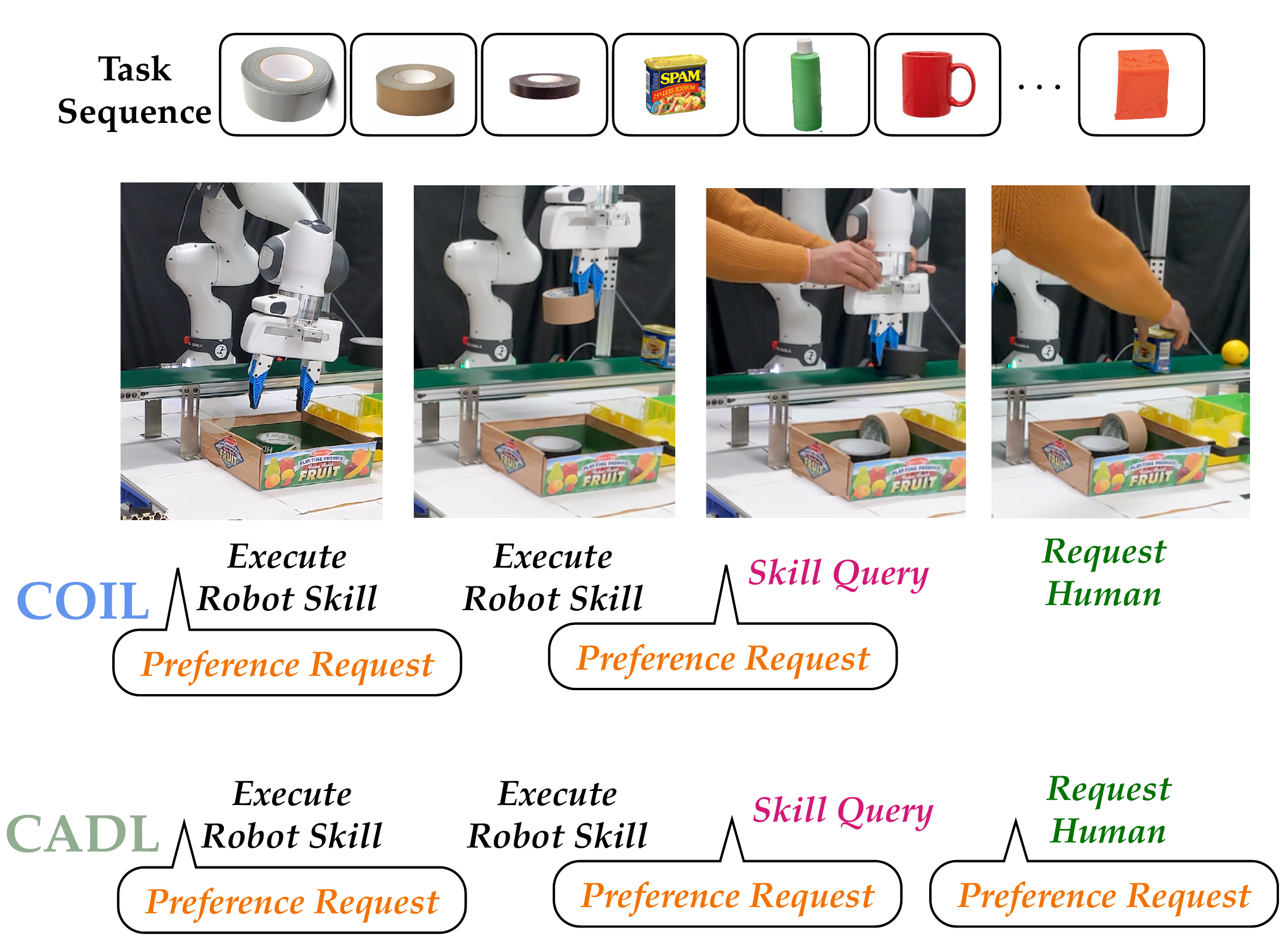}
\label{fig:real_objs}
\caption{In the real-world conveyor task sequence, \coil minimizes costs relative to \cadl, requesting user preferences when uncertain, executing known skills, and requesting demonstrations to learn new skills. We highlight three tasks along the plan to learn generated by \coil. \cadl requests unneeded preferences before assigning tasks to the human.}
\end{figure}

We report the results from our experiment in table \ref{tab:conveyor}. Compared to \cadl, \coil made fewer skill and preference requests. \cadl has a tendency to overburden the user with preference requests while \coil learns user preference for a task only when it intends to complete the task autonomously. A priori, it is  desirable to learn how to manipulate the high-frequency object in each experiment. However, we observed that the robot was unable to learn a skill for the white mug, possibly because its shiny surface made our camera-based pose estimation difficult. Because of its adaptive nature, \coil changed its plan and assigned all mugs to the human. By contrast, \cadl repeatedly requested the human to teach it how to manipulate the mug and expended significant human effort in the process. 

\textbf{Limitations.} To plan for a set of tasks, COIL must know the composition of that set beforehand. This prior knowledge will be difficult to come by reliably in all real-world settings. Additionally, the human's task preferences are assumed to be discrete and stationary, but prior work reveals that a human's preferences take a variety of form and are subject to change over time.
Furthermore, we use prior knowledge about similarities between tasks to predict the generalization of skills to future tasks. In many complex domains, such a prior may be inaccurate. For example, while the robot may be confident that its learned skill can be executed again on the next instance of the same skill, perturbations in the environment (e.g., slight differences in object orientation), unobserved environment variables (e.g., lighting), and other factors may cause the robot to fail when rolling out its learned skill.

\section{Conclusion} \label{sec:conclusion}
Teams are best positioned to succeed when each member accounts for their teammate's abilities and preferences. Robot teammates will need to do the same if human-robot teams are to succeed, and COIL is our proposed means of enabling this capability. By planning over multiple interaction types, contributions from both human and robot, and the human's associated task preferences and contribution costs, COIL identifies plans which minimize the burden placed upon the human on its way to achieving task success. COIL does so by formulating the interaction as a facility location problem which identifies the optimal sequence of robot actions and human contributions and then deducing if that optimal plan could be improved with more certain beliefs about the human teammate's preferences. COIL identifies plans which cost less as compared to baselines and efficiently scales to long task sequences. Moreover, COIL can learn human preferences for both task completion and teammate task contributions, and it can re-plan when skills prove too difficult to learn. Finally, COIL demonstrates its effectiveness in a real-world conveyor domain, inspired by its potential application in collaborative factories of the future. 
In our future work, we are interested in extending our planner to multi-step tasks. One possibility would be to decompose a multi-step tasks into a sequence of sub-tasks which can then be handled by our planner. Another direction of interest is to plan for unordered sets of tasks which introduces an additional challenge of scheduling.

\section*{Acknowledgments}
We thank Prof. Reid Simmons and Prof. Henny Admoni for their valuable feedback, Prof. Anupam Gupta for insightful discussions and Lakshita Dodeja for her help with our real-world robot experiments.
This work was supported by the Office of Naval Research (ONR) under REPRISM MURI N000142412603 and ONR grant \#N00014-22-1-2592, as well as by the National Science Foundation (NSF) via grant \#1955361. Partial funding was also provided by the Robotics and AI Institute.


\bibliographystyle{plainnat}
\bibliography{references,zotero}  

\ifshowappendix
    \begin{appendices}\label{sec:app}

\section{Bayesian Preference Belief Estimator}\label{sec:pref_filter}
We model the human's response $x_i$ to the robot's request for user preferences on task $\tau_i$ using a Bayesian belief estimator. Let $\seq{b}{i}{1}{N}$ be the set of beliefs for the sequence of tasks at timestep $t$, with $b^i$ the current beliefs for task $\tau_i$. The updated beliefs are
$$b^{'i}(\theta') := b^{i}(\theta'|x_i) \propto b^{i}(\theta')\mathbb{P}[x_i|\theta'] $$
which serve as updated beliefs for $\tau_i$ after receiving the human's response. The domain-specific likelihood $\mathbb{P}[x|\theta']$ is defined in Section \ref{ap_sec:skill_return}. The robot posits similar objects receive similar treatment and updates its beliefs for future similar tasks using the similarity function between tasks. That is, $\forall j\in (i,N)$, if $f(\lVert \tau_i - \tau_j\rVert) < \epsilon$, we update $b^{'j}(\theta') := b^{j}(\theta'|x) \propto b^{j}(\theta')\mathbb{P}[x_i|\theta']$.


\section{Environment Details}
\label{ap_sec:envs}
\begin{figure}[h]
\centering
\includegraphics[width=0.6\columnwidth]{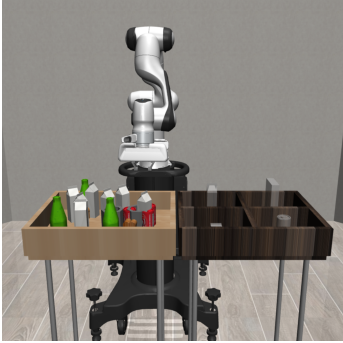}
\caption{Manipulation domain. Each interaction involves picking 50 objects and placing them in their respective bins.}
\end{figure}

In this section, we describe in detail the environments used in our experimental evaluation.
\begin{enumerate}
\item \textbf{Gridworld} is a discrete, 17x17 grid comprised of a sequence of 15 total objects of nine varieties distinguished by object \emph{type}, \emph{color}, and \emph{position}. Each object defines a task the agent must execute, and each task is to navigate to an object, pick it up, and transport it to the user's preferred location for that object. There are three possible goal locations. The 15 objects are randomly drawn from a set of 9 unique objects, whose distribution may vary. The robot is initialized with an empty skill library $\mathcal{L}_0$. We report results for 30 randomly initialized task sequences.
\item \textbf{7DoF Manipulation} is a simulated decluttering task where a Franka robot manipulator must pick up and put away 30 total objects of seven different varieties (e.g., milk carton, mug) into one of four different shelves. The frequency and order of all the objects are randomly sampled to generate a task sequence. Each object defines a task the agent must execute. The robot is initialized with an empty skill library $\mathcal{L}_0$.
\item \textbf{Conveyor} is a real-world environment where a Franka tabletop manipulator needs to pick up the object on the conveyor and place it into one of the three colored boxes to which it belongs. The robot is initialized with an empty skill library $\mathcal{L}_0$. We randomly sample five task sequences consisting of 20 tasks each. Since many objects are common across these experiments, we use a pretrained policy as the expert demonstrator to avoid redundant effort in the experiments.  We pretrain GEM~\cite{jia2024open} using human demonstrations for every task. GEM provides language-conditioned few-shot learning ability via $SE(2)$-equivariance and open-vocabulary relevancy maps. 
\end{enumerate}

\section{Human Cost Profiles}
\label{sec:humancostprofiles}
In our evaluations, we examined the behavior of \coil compared with baselines under a suite of human teaching cost profiles. We draw from prior literature on teaching with different feedback types \cite{fitzgeraldINQUIREINteractiveQuerying2022} to assign reasonable costs to each. We intuit robot execution is minimally burdensome and refer to prior works which find that demonstrations are more costly than preference queries \cite{koppol2021interaction}. The question is for different users, how much more costly is a demonstration than a preference request, and how do those costs compare to the cost of the human executing the task themselves. As these relative costs may differ across users, we investigate the performance of each algorithm under various demonstration (skill teaching) cost profiles. We assign a cost of $c_{rob}=10$ to each robot execution, $c_{hum}=80$ to human task execution, and $c_{pref}=20$ to each preference query. The robot incurs a penalty cost of $c_{fail}=100$ if it fails to successfully execute any skill, and a penalty of $c_{pref-fail}=100$ if it successfully executes a skill by placing the object in a goal undesired by the user. We examine profiles across the cost of human skill teaching.
\begin{enumerate}
    \item \emph{Low-Cost Teaching} ($c_{skill}=50$): This profile simulates an experienced teacher for whom providing demonstrations of robot skills is not burdensome. 
    \item \emph{Med-Cost Teaching} ($c_{skill}=100$): This profile models teaching to be moderately more burdensome than performing the task themselves.
    \item \emph{High-Cost Teaching} ($c_{skill}=200$): This profile simulates a novice teacher for whom providing demonstrations of robot skills is highly burdensome. 
\end{enumerate}

\section{Baseline Details}
\label{sec:baselines}
In this section, we describe in detail the baselines used in our experimental evaluation.

\subsection{Information Gain (IG).}
At timestep $t$, with the current task $\tau_i$, \info reasons over actions in set $A = \{pref, skill, rob, hum\}$.

\info takes action $a^{\tau_i}$ at task $\tau_i$ where:
\begin{equation}
    \begin{split}
        a^{\tau^i}&=\arg\max_{a \in A} \bigg[ IG^{\tau_i}(a)\ind_{pref} + SG^{\tau_i}(a)\ind_{skill} \bigg] - \beta c({a})
    \end{split}
\end{equation}
where $\ind_{pref}$ is a indicator variable equal to 1 if $a=pref$, and so on. Effectively, this objective reasons about (1) the information gain in preference space, $IG$, of requesting a preference, and (2) the information gain in skill space, or skill gain $SG$ of requesting a skill demonstration. For each potential robot action, the objective compares the potential gain relative to the cost of the query, where cost is scaled by factor $\beta$ to enable composition. We tune the $\beta$ hyperparameter, with details in Section \ref{subsec:ig_hyp}.

Let $\seq{b}{i}{1}{N}$ be the set of beliefs for the sequence of tasks at timestep $t$. Using a Bayesian belief update, let $\seq{b^g}{i}{1}{N}$ be the set of updated beliefs for the sequence of tasks at timestep $t$ when response $g$ is given to query $a$ on task $\tau_i$. In turn, the optimistic entropy reduction when the best potential choice $g$ is made given query $a$ can be written as $IG^{\tau_i}(a)=\max_g \sum_{j=i}^N H(b^g_{j})-H(b_i)$.

Skill gain is computed similarly and can be reasoned about as information gain over skills. Let $\mathcal{L}_t$ be the current skill library. Let $\mathcal{L}^g_{t} = \mathcal{L}_t \cup \tau_i$ be the updated skill library should a demonstration to goal $g$ be requested on $\tau_i$. Skill gain, then, can be represented as $SG^{\tau_i}(a)=\max_g \sum_{j=i}^N \mathbb{E}_{b_j^{\theta}} \bigg[\rho_{\pi, \mathcal{L}^g_t}^\text{safe}(\tau_j, \theta)-\rho_{\pi, \mathcal{L}_t}^\text{safe}(\tau_j, \theta) \bigg]$, where $\rho_{\pi, \mathcal{L}_t}^\text{safe}(\tau_j, \theta')$ is the likelihood of safe execution on $\tau_j$ given skill library $\mathcal{L}_t$ (computed by a lookup into $\mathcal{L}_t$).

Lastly, $\beta$ represents a scale factor needed to scale the costs in order to make them comparable with information gain and skill gain values. Let $\hat{\theta}_j := \arg\max_{\theta'}b^{\theta}_j$ be the maximum a posteriori estimate for the robot's belief on the preference for task $\tau_j$. The costs for each query follow from the costs used by COIL:
\begin{enumerate}
    \item $c({pref}) = c^i_\text{pref}$
    \item $c({rob}) = c^i_\text{rob} +c_\text{skill-fail} \cdot (1 - \rho_{\pi, \mathcal{L}_t}^\text{safe}(\tau_j, \hat{\theta}_j)) +c_\text{pref-fail} \cdot (1 - b^\theta_j(\hat{\theta}_j))$
    \item $c({skill}) = c^i_\text{skill} + c^i_\text{rob} + c_\text{skill-fail} \cdot (1-\rho_{\pi, \mathcal{L}_t}^\text{safe}(\tau_j, \hat{\theta}_j))$
    \item $c({hum}) = c^i_{hum}$
\end{enumerate}

\subsection{Confidence-based ADL (C-ADL)}
\citet{vats2022synergistic} use a mixed integer program to decide when to learn new skills and when to delegate tasks to the user. While their method accounts for the future utility of learning a task, they do so without also considering the human's preferences. In order to ensure a fair comparison in our paradigm, we enable \cadl to request preference queries for each task with a confidence-based threshold. We use a confidence threshold of $\alpha=0.8$. If the robot isn't sufficiently confident about the user preference, the robot requests a preference. Once received, the robot acts according to its ADL plan with $\hat{\theta}_j := \arg\max_{\theta'}b^{\theta}_j$ the maximum a posteriori estimate for the robot's belief on the preference for task $\tau_j$. 

\subsection{Confidence-based Autonomy (CBA).}
Inspired by~\citet{chernovaConfidencebasedPolicyLearning2007}, \cba requests skill and preference teaching if the robot is uncertain about either. Otherwise, it assigns the task to itself or the human. We use a confidence threshold of $\alpha=0.8$. If the robot isn't sufficiently confident about the user preference, the robot requests the user's preference. Let $\hat{\theta}_j := \arg\max_{\theta'}b^{\theta}_j$ be the maximum a posteriori estimate for the robot's belief on the preference for task $\tau_j$. Next, the robot requests a skill query if it isn't sufficiently confident on the execution of the skill. Given current skill library at timestep $t$, recall $\mathcal{L}_t$, $\rho_{\pi, \mathcal{L}_t}^\text{safe}(\tau_j, \hat{\theta}_j)$ is the likelihood of task success on $\tau_j$ given skill library $\mathcal{L}_t$, which can be computed by a lookup into $\mathcal{L}_t$. If both are confident ($\rho_{\pi, \mathcal{L}_t}^\text{safe}(\tau_j, \hat{\theta}_j)>\alpha$), then the robot executes its skill with the most likely preference. Important to note, this is a learning-prioritizing baseline, as it reasons through all learning options, and doesn't reason over delegation to the human.



\subsection{Information Gain Hyperparameter Sensitivity Analysis}
\label{subsec:ig_hyp}
The \info baseline relies on a scale factor $\beta$ which scales the human teaching costs to be evaluated jointly with the information gain. The choice of this $\beta$ scale factor influences the performance of the algorithm, as a factor too high will focus only on picking the action with the minimal cost, disregarding information gain. Prior to running our comparisons, we first running a sensitivity analysis on \info for different values of $\beta \in [0.001, 0.01, 0.05, 0.1, 0.5, 1.0]$. In Figure \ref{fig:ig_hyperparameters}, we plot the average plan cost for 10 seeds per each of the three human cost profiles. For \info in our comparisons, we choose the scale factor which achieves the lowest plan cost, $\beta=0.01$.
\begin{figure}[!tb]
    \centering
    \includegraphics[width=0.8\linewidth]{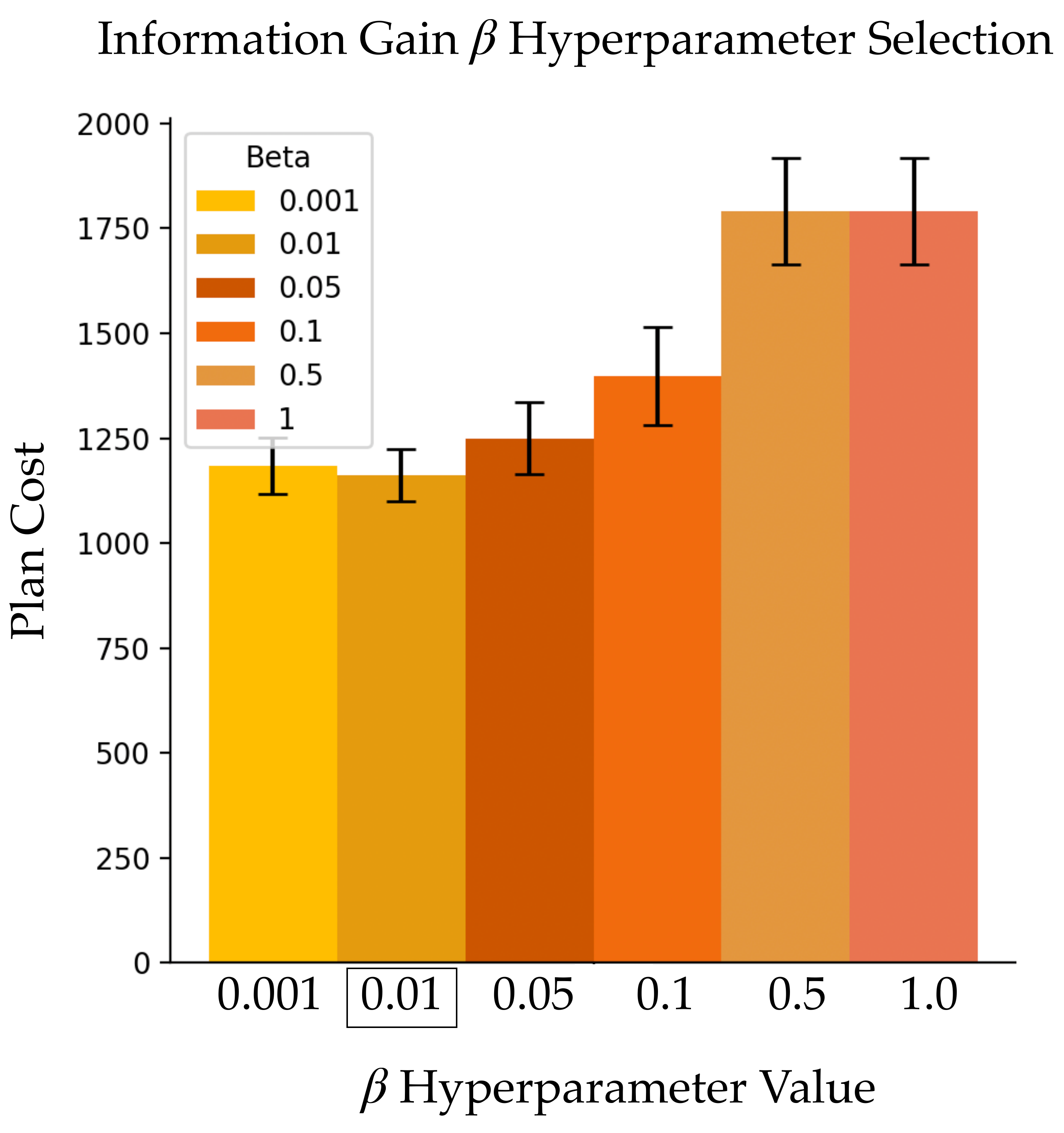}
    \caption{We plot the average plan cost for 10 seeds per all of the three human cost profiles. For \info in our comparisons, we choose the scale factor which achieves the lowest plan cost, $\beta=0.01$.}
    \label{fig:ig_hyperparameters}
\end{figure}

\section{Domain-specific Instantiations of Skill Return Model and Preference Beliefs}
\label{ap_sec:skill_return}

\subsection{Gridworld}
We provide the robot with a domain-specific function $\rho_{\pi}^\text{safe}$ to predict the probability of safe execution. $\rho_\pi^\text{safe}(\tau', \theta')$ predicts the probability of safe execution of a skill $\pi$ when deployed on a task $\tau'$ with parameter $\theta'$. 
This function is defined based on similarity between $(\tau', \theta')$ and the tasks currently in the robot's skill library $\mathcal{L}$. For the Gridworld domain, we define $\rho_\pi^\text{safe}(\tau', \theta')=1$ if $\lVert \tau - \tau'\rVert < \epsilon$ for any $\tau \in \mathcal{L}$, where $\lVert \tau - \tau'\rVert=0$ if the object at $\tau$ and $\tau'$ are of the same type and color, and 0 otherwise. $\epsilon$ is a small constant 0.01. 

The set of beliefs for the sequence of tasks is initialized $\seq{b}{i}{1}{N}$ is initialized as a uniform distribution over Bernoulli random variables indicating whether each goal position is suitable for the task object. Upon receiving a human response for where the object should be placed, the likelihood function for Bernoulli random variable associated with the goal is given a probability of 1, with the others 0, which is used in the Bayes update to update the robot's preference beliefs.

\subsection{Simulated 7DoF Manipulation}
The manipulation domain has $7$ different types of objects: milk carton, bread loaf, cereal box, can, bottle, lemon and cup. Each object is described by its type, color, dimensions and pose. 
We define  $\rho_\pi^\text{safe}(\tau', \theta')=1$ if $\lVert \tau - \tau'\rVert < \epsilon$ and $\theta$ is equal to $\theta'$, where $\tau$ and $\theta$ are the task and preference respectively that $\pi$ was trained for. $\lVert \tau - \tau'\rVert=0$ if the object at $\tau$ and $\tau'$ are of the same type (e.g. milk carton), and 0 otherwise. $\epsilon$ is a small constant 0.01. This is based on the assumption that the skills taught to the robot can generalize to different colors and object poses. We use the same similarity function in the preference belief estimator to update the beliefs of similar future objects.

\subsection{Conveyor}
The conveyor domain has $12$ different types of objects: pink bottle, green bottle, white mug, red mug, brown tape, black tape, orange block, blue block, banana, lemon, can and spam. Each object is described by its  type, category $\in \{\text{office}, \text{kitchen}, \text{toys}\}$, color, dimensions and pose. 
We define  $\rho_\pi^\text{safe}(\tau', \theta')=1$ if $\lVert \tau - \tau'\rVert < \epsilon$ and $\theta$ is equal to $\theta'$, where $\tau$ and $\theta$ are the task and preference respectively that $\pi$ was trained for. $\lVert \tau - \tau'\rVert=0$ if the object at $\tau$ and $\tau'$  of the same type (e.g. tape), and 0 otherwise. $\epsilon$ is a small constant 0.01. This is based on the assumption that the skills taught to the robot can generalize to variations in color, dimensions and pose. The preference belief estimator uses the object category to update the beliefs of other objects. This is based on the assumption that the human is likely to have similar preferences for objects of the same category.

\subsection{Gridworld Under All Teachable Objects: Statistical Analyses}
\label{ap_sec:statistical}
Our experiments under varied human cost profiles, we evaluated our null hypothesis of no significance in the sample (N=30) of total costs between the planning approaches. We compared statistical differences using a one-way ANOVA \cite{st1989analysis} between costs achieved for each algorithm, and evaluated pairwise differences using pairwise t-tests \cite{kim2015t} with Bonferroni correction \cite{weisstein2004bonferroni} if significant main effects were present. For the Low-Cost human profile, we did not find significant differences between the approaches.

For the Med-Cost human cost profile in the Gridworld domain, we found significant effects of planner ($F=23.48, p<0.001$), and found significant ($p<0.01$) pairwise differences between \coil and all baseline algorithms. \coil achieved significantly lower cost plans than \cadl ($F=-3.54, p<0.001$), \info ($F=-7.13, p<0.001$), and \cba ($F=-7.12, p<0.001$). \cadl also achieved significantly lower plan costs than \info ($F=-4.14, p<0.001$) and \cba ($F=-4.14, p<0.001$).

For the High-Cost human cost profile in the Gridworld domain, we found significant effects of planner ($F=120.58, p<0.001$), and found significant ($p<0.01$) pairwise differences between \coil versus \info ($F=-5.92, p<0.001$) and \cba ($F=-15.07, p<0.001$). \cadl also achieved significantly lower plan costs than \info ($F=-4.43, p<0.001$) and \cba ($F=-14.07, p<0.001$).

\section{Additional Results on Adapting to Teaching Failures in the Gridworld Domain}
\label{ap_sec:failures}
We similarly compared statistical differences using a one-way ANOVA between costs achieved for each algorithm, and evaluated pairwise differences using pairwise t-tests with Bonferroni correction if significant main effects were present. 

\subsection{Adapting when 10\% of Objects are Challenging-to-Learn}
We did not find significant effects of planner in the Low-Cost profile.
Under Med-Cost in the Gridworld domain, we found significant effects of planner ($F=14.54, p<0.001$), and found significant ($p<0.01$) pairwise differences between \coil versus \info and \cba. We did not find statistical significance between \coil and \coilnoadapt plan costs, but observe a slight decrease in plan cost in Figure \ref{fig:gridworld_unteachable10}. \coil achieved significantly lower cost plans than \info ($F=-5.73, p<0.001$), and \cba ($F=-5.73, p<0.001$). \coilnoadapt also achieved significantly lower plan costs than \info ($F=-1.25, p<0.001$) and \cba ($F=-4.90, p<0.001$). \cadl outperformed \info ($F=-3.70, p<0.001$) and \cba ($F=-3.70, p<0.001$). 
\begin{figure}[!tb]
    \centering
    \includegraphics[width=0.99\columnwidth]{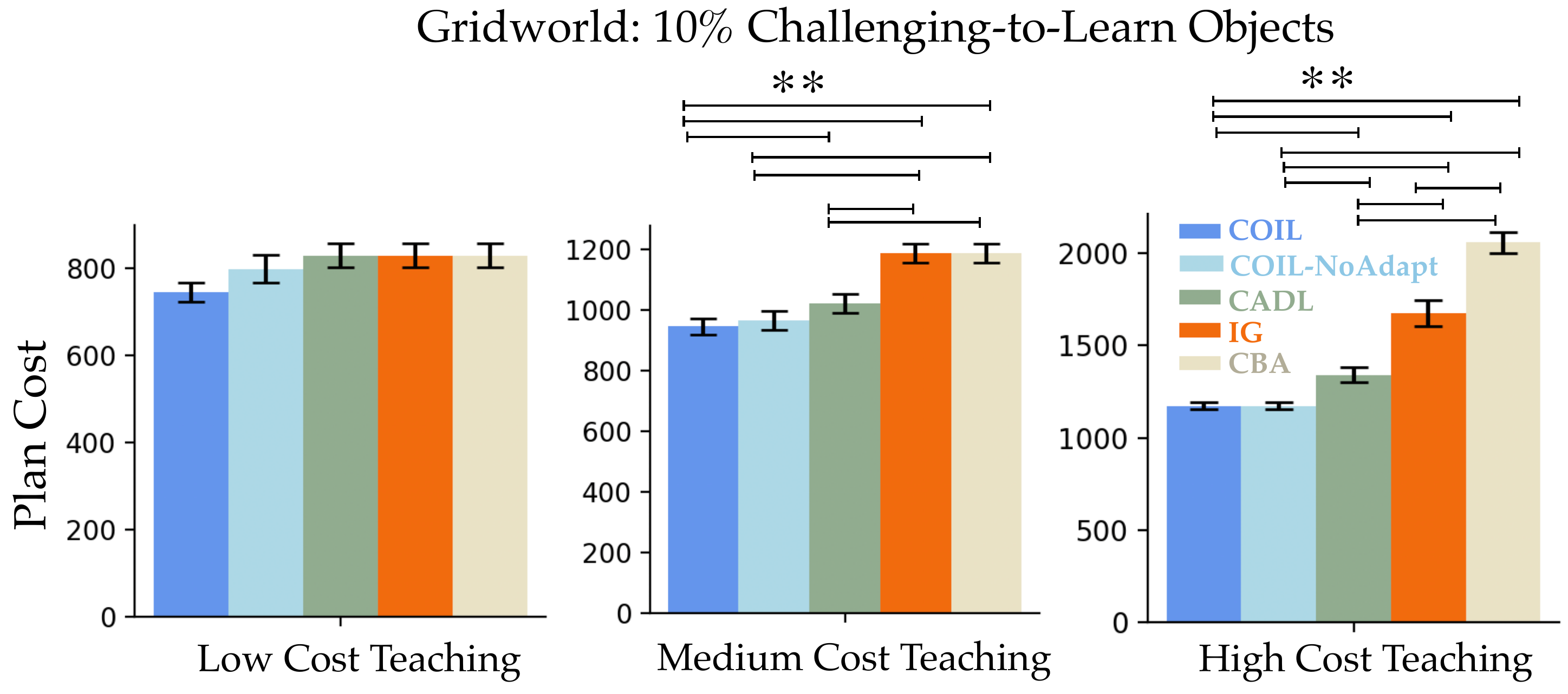}
    \caption{In the Gridworld environment, when 10\% of the objects are challenging-to-learn, \coil achieves the lowest plans, compared to \coilnoadapt and other baselines in all teaching profiles. In high-cost and med-cost teaching, \coil and \coilnoadapt often assign tasks to the human off the bat, reducing the impact of adaptivity. Error bars represent standard error over N=30.}
    \label{fig:gridworld_unteachable10}
\end{figure}

Under High-Cost in the Gridworld domain, we found significant effects of planner ($F=222.61, p<0.001$), and found significant pairwise differences between \coil versus \cadl, \info and \cba. We did not find statistical significance between \coil and \coilnoadapt plan costs. \coil achieved significantly lower cost plans than \cadl($F=-6.55, p<0.001$), \info ($F=-20.32, p<0.001$), and \cba ($F=-20.25, p<0.001$). \coilnoadapt also achieved significantly lower plan costs than \cadl ($F=-6.47, p<0.001$), \info ($F=20.24, p<0.001$) and \cba ($F=-20.24, p<0.001$). \cadl outperformed \info ($F=-13.08, p<0.001$) and \cadl ($F=--13.35, p<0.001$), and \info outperformed \cba ($F=-0.55, p<0.001$).

\subsection{Adapting when 50\% of Objects are Challenging-to-Learn}
Under Low-Cost in the Gridworld domain, we found significant effects of planner ($F=18.57, p<0.001$), and found significant pairwise differences between \coil compared to all baselines. We found statistical significance between \coil and \coilnoadapt plan costs, with \coil achieving a lower plan cost than \coilnoadapt ($F=-4.57, p<0.001$). \coil achieved significantly lower cost plans than \cadl($F=-7.72, p<0.001$), \info ($F=-7.72, p<0.001$), and \cba ($F=-7.72, p<0.001$). We did not find additional pairwise significances between the other planners.

Under Med-Cost in the Gridworld domain when half of the unique objects are challenging-to-learn, we found significant effects of planner ($F=50.03, p<0.001$), and found significant pairwise differences between \coil versus \cadl, \info and \cba. We did not find statistical significance between \coil and \coilnoadapt plan costs. \coil achieved significantly lower cost plans than \cadl($F=-4.91, p<0.001$), \info ($F=-11.74, p<0.001$), and \cba ($F=-11.74, p<0.001$). \coilnoadapt also achieved significantly lower plan costs than \info ($F=-8.73, p<0.001$) and \cba ($F=-8.73, p<0.001$), but had no difference with \cadl. \cadl outperformed \info ($F=-5.63, p<0.001$) and \cba ($F=-5.63, p<0.001$).

Under High-Cost, we found significant effects of planner ($F=127.82, p<0.001$), and found significant pairwise differences between \coil versus \cadl, \info and \cba. We did not find statistical significance between \coil and \coilnoadapt plan costs. \coil achieved significantly lower cost plans than \cadl($F=-6.20, p<0.001$), \info ($F=-12.15, p<0.001$), and \cba ($F=-18.80, p<0.001$). \coilnoadapt also achieved significantly lower plan costs than \cadl ($F=-6.20, p<0.001$), \info ($F=-12.15, p<0.001$) and \cba ($F=-18.80, p<0.001$). \cadl outperformed \info ($F=-7.11, p<0.001$) and \cba ($F=-11.74, p<0.001$), and \info outperformed \cba ($F=-3.02, p<0.001$).

\subsection{Adapting when 90\% of Objects are Challenging-to-Learn}
We lastly increased the proportion of challenging-to-learn objects to nearly all object varieties to evaluate how each planner influenced the interaction (Figure \ref{fig:gridworld_unteachable90}).
\begin{figure}[!tb]
    \centering
    \includegraphics[width=0.99\columnwidth]{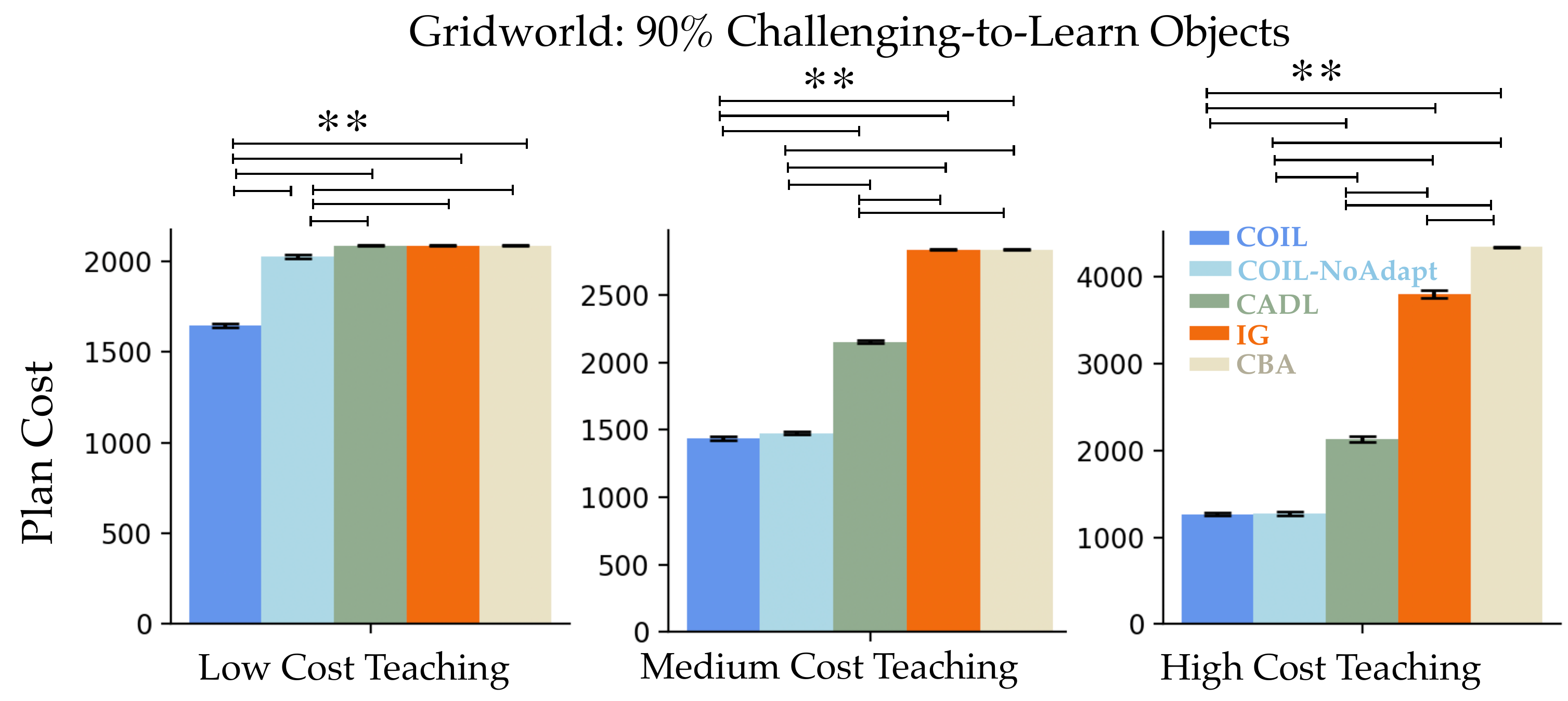}
    \caption{In the Gridworld environment, when 90\% of the objects are challenging-to-learn, \coil achieves the lowest plans, compared to \coilnoadapt and other baselines in all teaching profiles. In high-cost and med-cost teaching, \coil and \coilnoadapt often assign tasks to the human off the bat, reducing the impact of adaptivity. Error bars represent standard error over N=30.}
    \label{fig:gridworld_unteachable90}
\end{figure}
Under Low-Cost, we found significant effects of planner ($F=918.38, p<0.001$), and found significant pairwise differences between \coil versus all baselines. \coil achieved significantly lower cost plans than \coilnoadapt($F=28.35, p<0.001$), \cadl($F=-40.92, p<0.001$), \info ($F=-40.92, p<0.001$), and \cba ($F=-40.92, p<0.001$). \coilnoadapt also achieved significantly lower plan costs than \cadl ($F=-6.90, p<0.001$), \info ($F=-6.90, p<0.001$) and \cba ($F=-6.90, p<0.001$). We did not find additional pairwise significances between the other planners.

Under Med-Cost, we found significant effects of planner ($F=4513.26, p<0.001$), and found significant pairwise differences between \coil versus \cadl, \info and \cba. We did not find statistical significance between \coil and \coilnoadapt plan costs. \coil achieved significantly lower cost plans than \cadl($F=-41.10, p<0.001$), \info ($F=-100.08, p<0.001$), and \cba ($F=-100.08, p<0.001$). \coilnoadapt also achieved significantly lower plan costs than \cadl ($F=-37.36, p<0.001$), \info ($F=-92.01, p<0.001$) and \cba ($F=-92.01, p<0.001$). \cadl outperformed \info ($F=-62.45, p<0.001$) and \cba ($F=-62.45, p<0.001$).

Under High-Cost, we found significant effects of planner ($F=2596.18, p<0.001$), and found significant pairwise differences between \coil versus \cadl, \info and \cba. We did not find statistical significance between \coil and \coilnoadapt plan costs. \coil achieved significantly lower cost plans than \cadl($F=-23.25, p<0.001$), \info ($F=-51.29, p<0.001$), and \cba ($F=-169.35, p<0.001$). \coilnoadapt also achieved significantly lower plan costs than \cadl ($F=-22.16, p<0.001$), \info ($F=-49.97, p<0.001$) and \cba ($F=-145.20, p<0.001$). \cadl outperformed \info ($F=-29.50, p<0.001$) and \cba ($F=-67.20, p<0.001$), and \info outperformed \cba ($F=-11.71, p<0.001$).

\section{Simulated 7DoF Manipulation: Statistical Analyses}
\label{ap_sec:manip_statistical}
We similarly compared statistical differences using a one-way ANOVA between costs achieved for each algorithm, and evaluated pairwise differences using pairwise t-tests with Bonferroni correction if significant main effects were present. In this experiment, we compare with our 3 strongest baselines, derived from our earlier results, removing \cba from our analysis.

Under Low-Cost, we found significant effects of planner ($F=996.49, p<0.001$), and found significant pairwise differences between \coil versus all baselines. \coil achieved significantly lower cost plans than \coilnoadapt($F=-6.99, p<0.001$), \cadl($F=-11.10, p<0.001$), \info ($F=-60.68, p<0.001$). \coilnoadapt also achieved significantly lower plan costs than \cadl ($F=-3.41, p<0.001$), and \info ($F=-49.19, p<0.001$). \cadl outperformed \info ($F=-36.55, p<0.001$).

Under Med-Cost, we found significant effects of planner ($F=4445.39, p<0.001$), and found significant pairwise differences between \coil versus all baselines. \coil achieved significantly lower cost plans than \coilnoadapt($F=-11.99, p<0.001$), \cadl($F=-11.16, p<0.001$), \info ($F=-138.80, p<0.001$). \coilnoadapt also achieved significantly lower plan costs than \cadl ($F=-2.35, p<0.001$), and \info ($F=-111.16, p<0.001$). \cadl outperformed \info ($F=-87.31, p<0.001$).

Under High-Cost, we found significant effects of planner ($F=166.95, p<0.001$), and found significant pairwise differences between \coil versus all baselines. \coil achieved significantly lower cost plans than \coilnoadapt($F=-15.71, p<0.001$), \cadl($F=-19.03, p<0.001$), \info ($F=-36.22, p<0.001$). \coilnoadapt also achieved significantly lower plan costs than \cadl ($F=-4.60, p<0.001$), but had no difference when compared to \info. \cadl underperformed \info ($F=4.29, p<0.001$).

\end{appendices}
\fi
\end{document}